\theoremstyle{plain}
\newtheorem{theorem}{Theorem}[section]
\newtheorem{proposition}[theorem]{Proposition}
\newtheorem{lemma}[theorem]{Lemma}
\newtheorem{corollary}[theorem]{Corollary}
\theoremstyle{definition}
\newtheorem{definition}[theorem]{Definition}
\theoremstyle{remark}
\newtheorem{remark}[theorem]{Remark}
\DeclareMathOperator*{\argmax}{arg\,max}
\DeclareMathOperator*{\argmin}{arg\,min}
\newcommand{\cp}{{\mathcal{C}_p}}
\newcommand{\supp}{{\operatorname{supp}}}
\newcommand{\pr}[1]{\mathbb{P}\left({#1}\right)}
\newcommand{\ep}[1]{\mathbb{E}\left({#1}\right)}
\title{Robust Graph-Based Semi-Supervised Learning\\via $p$-Conductances}
\author{Sawyer Jack Robertson\thanks{Equal contribution} \thanks{\footnotesize Department of Mathematics, UC San Diego} \thanks{Halicio{\u g}lu Data Science Institute, UC San Diego} , Chester Holtz${}^\ast {}^\ddagger$, Zhengchao Wan\thanks{Department of Mathematics, University of Missouri} , \\
Gal Mishne${}^\ddagger$, Alexander Cloninger${}^\dagger{}^\ddagger$}
\date{}
\begin{document}

\maketitle

\vspace*{-.5in}
\begin{abstract}
    We study the problem of semi-supervised learning on graphs in the regime where data labels are scarce or possibly corrupted. We propose an approach called $p$-conductance learning that generalizes the $p$-Laplace and Poisson learning methods by introducing an objective reminiscent of $p$-Laplacian regularization and an affine relaxation of the label constraints. This leads to a family of probability measure mincut programs that balance sparse edge removal with accurate distribution separation. Our theoretical analysis connects these programs to well-known variational and probabilistic problems on graphs (including randomized cuts, effective resistance, and Wasserstein distance) and provides motivation for robustness when labels are diffused via the heat kernel. Computationally, we develop a semismooth Newton–conjugate gradient algorithm and extend it to incorporate class-size estimates when converting the continuous solutions into label assignments. Empirical results on computer vision and citation datasets demonstrate that our approach achieves state-of-the-art accuracy in low label-rate, corrupted-label, and partial-label regimes.

    {\footnotesize \textbf{\texttt{Keywords:}} semi-supervised learning, node classification, robust machine learning, graph effective resistance, optimal transport}

    {\footnotesize \textbf{\texttt{MSC2020:}} 68Q87, 90C35, 05C90, 90C53}
\end{abstract}

\section{Introduction}
    Semi-supervised learning (SSL) algorithms exploit both labeled and unlabeled examples to learn predictors that achieve higher classification accuracy compared to those learned from the labeled data alone. This paradigm is motivated by the high cost of obtaining annotated data in real-world contexts. In the ``low label-rate regime,'' where very few labeled data are available, relationships among and affinities between the unlabeled data can be used to significantly improve the prediction accuracy of classic machine learning models. For example, the seminal work of \cite{zhu2003semi} proposed Laplace learning, which obtains class predictions from harmonic (or minimial energy) extensions of the available labels. Given a weighted graph $G=(V, E, w)$ on $n$ nodes and an initial set of $m\ll n$ binary node labels $y_i \in \{-1, 1\}^{m}$,~\cite{zhu2003semi} extend $y$ via the minimizer of the following program with constraints $\phi_i = y_i$:
    \begin{equation}\label{eq:laplace}
                \min_{\phi \in \mathbb{R}^{n}} \left\{ \phi^\top L\phi : \phi_i = y_i,\: 1\leq i \leq m \right\}.
        \end{equation}
    where $L$ is a graph Laplacian matrix. Despite the effectiveness of the method when a sufficient proportion of the data is labeled initially, it is well-known that as the ratio between the number of labeled vertices and unlabeled vertices approaches zero, methods based on Laplace learning yield solutions that degenerate towards a constant, as first characterized in~\cite{nadler2009limit}. More recently, various techniques have been proposed to resolve this degeneracy~\cite{calder2022hamilton,calder20poisson, flores2018algorithms, zhu2005semi, zhou2011ssl, holtz2024continuous}. For example, in Poisson Learning,~\cite{calder20poisson} precisely characterize the degeneracy of Laplace learning from the perspective of random walks over the graph. This interpretation motivated the authors to solve the following regularized problem with adjusted labels:
        \begin{equation}
            \min_{\phi \in \mathbb{R}^n}\phi^\top L \phi - \sum_{i=1}^m(y_i - \bar{y})\cdot \phi_i.
        \end{equation}
    Recent work has also considered $p$-Laplace learning \cite{alaoui2016asymptotic, flores2022}, which solves the following variational problem
        \begin{equation}\label{eq:p-laplace-learning}
            \min_{\phi\in \mathbb{R}^n} \left \{\sum_{i=1}^n w_{ij}|\phi_i - \phi_j|^p : \phi_i = y_i, \:\: 1 \leq i \leq m \right \}.
        \end{equation}
    and which recovers the Laplace learning method when $p=2$. While several works have explored the problem for specific $p$, e.g. $p=1$ \cite{jungsparselp2016} and $p=\infty$ (i.e. Lipschitz learning) \cite{kyng2015lipschitz,calder2019lipschitz}, more recent work has focused on $p$-Laplace regularization for general $p$, particularly in the low label-rate regime \cite{flores2022}, and recent numerical results show that $p > 2$ is superior to Laplace learning for graph-based SSL in this setting \cite{flores2022}. In related work, ~\cite{calder2022hamilton} analyzed applications of the $p$-eikonal equations $\sum_{j=1}^nw_{ij}(\phi_i - \phi_j)_+^p = g(\phi_i)$, where $g : V \to \mathbb{R}$. 

    From a computational perspective, related works include computation of the $p$-Laplacian and $p$-effective resistance~\cite{saito2023multi}, both of which involve minimizing a weighted $p$-norm. In the cases, when $p \in [2,\infty)$, the weighted $p$-norm is smooth and convex, but not strongly convex for $p>2$. First-order proximal methods are a natural choice, exhibiting linear convergence as shown in \cite{flores2018algorithms}. In~\cite{flores2022}, Newton methods with homotopy on $p$ were suggested for computing solutions to the $p$-Laplace equations~\eqref{eq:p-laplace-learning}. Recently, iteratively reweighted least squares were studied in depth in~\cite{adil2019irls}. Notably, the aforementioned methods struggle when $p\in \{1,\infty\}$. In contrast, we describe a globally superlinearly convergent method that works for any $p\in [1,\infty]$.

    While existing methods perform well across label rate regimes, we are not aware of approaches which are explicitly designed with robustness to label corruptions in mind. We propose a method that addresses this gap and which generalizes both Poisson learning and $p$-Laplace learning.

\subsection{Contributions}
   
    In this paper, we consider a variant of Poisson and $p$-Laplace learning by considering a natural affine relaxation of the equality constraint $\phi_i=y_i$. In the binary classification setting, we model the initial labels in the form of two probability measures $\mu, \nu\in\mathbb{R}^n$, respectively. We seek to solve the following program:
        \begin{align}\label{eq:bcut-intro}\tag{$\mathcal{C}_p$}
            \mathcal{C}_p(\mu,\nu) &= \min_{\substack{\phi \in \mathbb{R}^n \\ \phi^T(\mu-\nu) = 1}}(\sum_{\{i, j\} \in E}w_{ij}|\phi_i - \phi_j|^p)^{1/p},
        \end{align}
    for $p\in[1,\infty)$, and for $p=\infty$, we consider the program
        \begin{align}\label{eq:bcut-intro-infty}\tag{$\mathcal{C}_\infty$}
            \mathcal{C}_\infty(\mu,\nu) &= \min_{\substack{\phi \in \mathbb{R}^n \\ \phi^T(\mu-\nu) = 1}} \max_{\{i, j\} \in E}w_{ij}|\phi_i - \phi_j|.
        \end{align}
    One may think of these programs as searching for a vector $\phi$ which satisfies $\phi \approx \frac{1}{2}$ near $\supp(\mu)$ and $\phi \approx -\frac{1}{2}$ near $\supp(\nu)$, and such that $\phi$ has minimal energy with respect to the $p$-Laplacian operator on the graph. We can then produce class predictions by taking $\widehat{y} = \mathrm{sgn}(\phi^\ast)$ where $\phi^\ast$ is any mean-centered solution to $\cp$. In the setting of greater than two classes, we consider a ``one-vs-all'' formulation of the problem to produce a prediction among all of the classes (see \cref{rmk:one-vs-all} and \cref{fig:illustration-cuts-digit}). 

    We term $\cp(\mu,\nu)$ the measure $p$-conductance between $\mu,\nu$, reflecting its reciprocal relationship with measure effective resistance $(\mu-\nu)^T L^+ (\mu-\nu)$ when $p=2$ (see \cref{thm:generalized-gauge-duality} and \cref{cor:er}), and we refer to our method of obtaining class predictions from potentials $\phi$ as $p$-conductance learning (see \cref{rmk:class-pred}). The parameter $p$ can provide control on the sparsity of the solutions $\phi^\ast$, which tend to be concentrated on fewer nodes when $p = 1$ and many nodes as $p\rightarrow\infty$. In \cref{fig:illustration-cuts-demo}, we illustrate solutions to the program $\cp$ for choices of $\mu,\nu$ and $p$ on a toy graph.

    \begin{figure*}[t!]
        \begin{center}
            \includegraphics[width=0.95\linewidth]{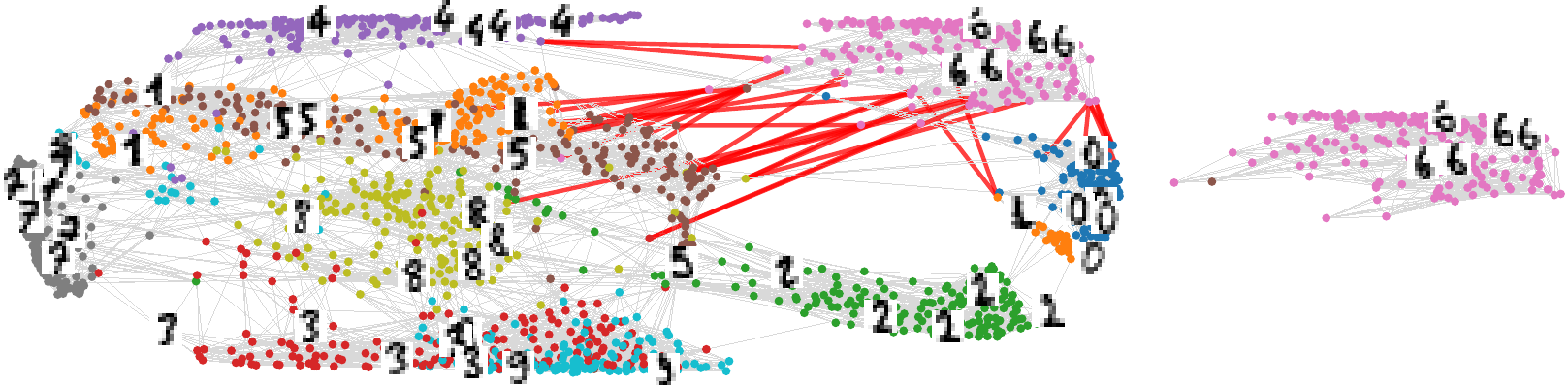}
        \end{center}\caption{
        \textbf{(Left)} One-vs-all measure \texttt{mincut} on the Sklearn Digits dataset~\cite{digits}. The initially labeled nodes are shown with images overlaid. $\mu$ is given by the five images of the digit six, and $\nu$ from all of the other classes. Solving the program~\eqref{eq:bcut-intro} for $p=1$, we obtain a sparse solution $\phi$ such that each $\{i, j\}\in E$ satisfies $|\phi_i-\phi_j| \approx 0$ (light gray) or $|\phi_i-\phi_j| \approx 0.11$ (red). \textbf{(Right)} Removing red edges isolates a connected component with near-perfect class separation.}
        \label{fig:illustration-cuts-digit}\vspace{-.5cm}
    \end{figure*}

    These programs demonstrate crisp theoretical properties which allow for a high degree of interpretability. Namely, we model this class of problems as a family of so-called ``measure minimum cuts,'' wherein the vector $\phi$, which we term a potential, is used to define an edge cut on the underlying graph and with respect to which the objective $\cp$ can be understood to be targeting competing localization and cut sparsity constraints. Our contributions are as follows. 
    
    \begin{itemize}
        \item In \cref{subsec:relations}, we show that for $p=1$ our programs yield a generalized version of the classical min-cut max-flow duality (\cref{th:grl-max-flow-min-cut}) and for $p=1, 2$, respectively, the program may be reformulated in terms of random cuts and normalized cuts of the graph $G$ (\cref{thm:mincuts-random-cuts} and \cref{rmk:normalized-measure-cut}). We then obtain direct relationships between $\mathcal{C}_p$, effective resistance, and Wasserstein distance in the settings of $p=2,\infty$, respectively (\cref{cor:w1} and \cref{cor:er}). 
        \item In \cref{subsec:robustness} we establish theoretical motivation for the robustness of class predictions produced by our method particularly when the initial labels are subject to a small diffusion step (\cref{thm:robustness}).
        \item In \cref{sec:algorithms}, we obtain fast algorithms to solve $\mathcal{C}_p$ which search for saddle points of the the augmented Lagrangian; namely, we introduce the semismooth Newton Conjugate Gradient method to solve the nonsmooth Newton equations (\cref{alg:alg1} and \cref{alg:alg2}). Then, we establish its convergence (\cref{thm:ssnal-convergence}), and present algorithms for cut-assignment when estimates of class cardinalities are provided. 
        \item In \cref{sec:experimental-results}, we present experiments on citation networks (Cora, Citeseer, Pubmed) and image datasets (MNIST, FashionMNIST, CIFAR-10, and CIFAR-100), on which our approach achieves state-of-the-art results.
    \end{itemize}

\section{Theoretical analysis of $p$-Conductances}\label{sec:properties-and-theory}

    In this section, we study the theoretical properties of the $p$-conductance program, focusing on its links to classical graph optimization problems and on stability of optimal potentials for $p=2$, in particular when the diffusion operator $e^{-tL}$ is applied as a pre-processing step.

    As before, $G=(V, E, w)$ is a weighted undirected graph on $n$ nodes and $N$ edges with symmetric edge weights $w_{ij}$. We denote by $E'$ the set of oriented edges, i.e., $E' = \{(i, j), (j, i): \{i, j\}\in E\}$. We define the degree of each node $i\in V$ by $d_i = \sum_{j=1}^n w_{ij}$, and the Laplacian matrix $L=D-W$ where $W$ is the weighted adjacency matrix of $G$. If $A, B\subseteq V$ we denote by $\#E(A, B)$ the sum of the weights of edges with one endpoint in each of $A,B$.
    
    We denote by $\mathsf{P}(V)$ the probability simplex of $V$, and model the initial labels in the form of two measures $\mu,\nu\in\mathsf{P}(V)$ which correspond either to sum-normalized indicators of the labelled nodes in each of two classes, or the measures obtained from the one-vs-all setup as described in \cref{rmk:one-vs-all}. Proofs of all results can be found in \cref{sec:proofs-sec-2}.

\subsection{Relating $p$-conductances to other problems}\label{subsec:relations}
    
    Recall that the node \texttt{mincut} problem seeks a bipartition $V_s, V_t \subseteq V$ for fixed nodes $s, t \in V$, such that $s \in V_s$, $t \in V_t$, and with minimal crossings $\# E(V_s, V_t)$. For disjointly supported $\mu, \nu \in \mathsf{P}(V)$, the problem extends naturally but can degenerate under strict interpretations (e.g., when unique edge crossings occur, see \cref{fig:illustration-cuts-demo}). A relaxation is achieved by $\mathcal{C}_1(\mu,\nu)$. 

    \begin{theorem}[Generalized Max Flow - Min Cut Theorem]\label{th:grl-max-flow-min-cut}
        Assume $G$ is connected and let $\mu,\nu\in\mathsf{P}(V)$. The program $\mathcal{C}_1(\mu,\nu)$ is realizable as $\mathtt{mincut}(\mu,\nu)$, which is a linear program in the variables $k=(k_{ij})\in \mathbb{R}^{2N}$ and $\psi\in\mathbb{R}^n$:
            \begin{align}\label{eq:linear-program-c1}
                \mathtt{mincut}(\mu,\nu) = \begin{cases}
                    \text{minimize}&\sum_{(i, j)\in E'}w_{ij}k_{ij}\\
                    \text{subject to }& k_{ij}\geq 0\\
                    &k_{ij} \geq \psi_i - \psi_j\\
                    & (\mu-\nu)^T \psi \geq 1
                \end{cases}.
            \end{align}
        Moreover, $\mathtt{mincut}(\mu,\nu)$ admits a dual formulation in the variables $J=(J_{ij})\in \mathbb{R}^{E'}$ and $f\in\mathbb{R}$:
            \begin{align}\label{eq:dual-linear-program-c1}
                \mathtt{maxflow}(\mu,\nu) = \begin{cases}
                    \text{maximize }&f\\
                    \text{subject to }&f\geq 0\\
                    &J_{ij}\geq 0\\
                    &J_{ij}\leq w_{ij}\\
                    &\widetilde{B}J = f(\mu-\nu)
                \end{cases}.
            \end{align}
        Here, $\widetilde{B}\in\mathbb{R}^{n\times 2N}$ is the oriented incidence matrix according to $E'$. Finally, strong duality holds.
        %, i.e., $\mathtt{mincut}(\mu,\nu) = \mathtt{maxflow}(\mu,\nu)$.
    \end{theorem}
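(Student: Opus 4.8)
The plan is to establish the theorem in three movements: first rewriting $\mathcal{C}_1(\mu,\nu)$ as the linear program $\mathtt{mincut}(\mu,\nu)$, then computing its LP dual to obtain $\mathtt{maxflow}(\mu,\nu)$, and finally invoking LP strong duality together with a feasibility/boundedness check.

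First I would reformulate $\mathcal{C}_1$. The objective $\sum_{\{i,j\}\in E} w_{ij}|\phi_i - \phi_j|$ is a sum over unoriented edges of an absolute value; using the standard epigraph trick, introduce for each oriented edge $(i,j)\in E'$ a slack $k_{ij}\ge 0$ with $k_{ij}\ge \psi_i - \psi_j$. Summing $\tfrac12\sum_{(i,j)\in E'} w_{ij}k_{ij}$ (or, as written, with the convention that $E'$-sums absorb the factor of two into the indexing) recovers $\sum_{\{i,j\}\in E} w_{ij}|\phi_i-\phi_j|$ at optimality, since at the optimum $k_{ij} = (\psi_i-\psi_j)_+$ and $k_{ij}+k_{ji} = |\psi_i-\psi_j|$. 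The constraint $\phi^\top(\mu-\nu)=1$ in $\mathcal{C}_1$ becomes the inequality $(\mu-\nu)^\top\psi \ge 1$: one must argue that relaxing equality to inequality does not change the optimal value. This follows because the objective is positively homogeneous of degree one in $\psi$ (after mean-centering, which is legitimate since $\mu,\nu\in\mathsf{P}(V)$ so $(\mu-\nu)^\top \mathbf{1}=0$ and the objective is translation-invariant): any feasible $\psi$ with $(\mu-\nu)^\top\psi = c > 1$ can be rescaled to $\psi/c$, strictly decreasing the objective, so the optimum is attained on the boundary $(\mu-\nu)^\top\psi = 1$. This gives the first displayed LP.

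Next I would take the LP dual of $\mathtt{mincut}(\mu,\nu)$. Assign a dual variable $J_{ij}\ge 0$ to each constraint $k_{ij} \ge \psi_i-\psi_j$ (i.e. $k_{ij} - \psi_i + \psi_j \ge 0$) and a scalar $f\ge 0$ to $(\mu-\nu)^\top\psi \ge 1$. Stationarity in $k_{ij}$ (which appears in the objective with coefficient $w_{ij}$ and in one constraint with coefficient $+1$, plus its own sign constraint $k_{ij}\ge 0$) yields $0 \le J_{ij} \le w_{ij}$. Stationarity in the free variable $\psi$ gives, collecting the coefficient of $\psi_i$ across all edge constraints and the measure constraint, exactly $\widetilde{B}J = f(\mu-\nu)$, where $\widetilde{B}$ is the oriented incidence matrix over $E'$ (the $\pm$ signs of $J_{ij}$ in the $i$th and $j$th rows are precisely the incidence structure). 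The dual objective is $\max f$. This is the second displayed LP; here $f$ plays the role of the total flow value pushed from the source distribution $\mu$ to the sink distribution $\nu$, and $J_{ij}\le w_{ij}$ is the capacity constraint.

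Finally, strong duality: both programs are feasible — $\mathtt{mincut}$ is feasible since $G$ connected implies some $\psi$ with $(\mu-\nu)^\top\psi\ge 1$ exists (take $\psi$ proportional to $L^+(\mu-\nu)$, or any separating potential, then scale) and $k$ can be chosen large; $\mathtt{maxflow}$ is feasible with $J=0$, $f=0$. The primal is bounded below by $0$; hence by the LP strong duality theorem the two optimal values coincide and are finite, and both optima are attained. The main obstacle I anticipate is the bookkeeping in the dual derivation — getting the orientation signs in $\widetilde{B}$ consistent with the $E'$ indexing convention, and carefully justifying the equality-to-inequality relaxation of the normalization constraint via the homogeneity/mean-centering argument rather than hand-waving it; the duality itself is then a routine application once the LP is written in standard form.
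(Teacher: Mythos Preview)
Your proposal is correct and follows essentially the same route as the paper: linearize the absolute values via oriented slack variables $k_{ij}=(\psi_i-\psi_j)_+$, then take the LP dual, then invoke strong duality from feasibility. The only cosmetic differences are that the paper derives the dual by writing the primal in explicit matrix standard form (splitting $\psi=\psi_+-\psi_-$) rather than via Lagrangian stationarity, and the paper does not spell out the homogeneity argument for replacing the equality constraint by $(\mu-\nu)^\top\psi\ge 1$ as carefully as you do.
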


    When only one node is labeled in each class, $\mathtt{mincut}(\mu,\nu)$ recovers the usual node \texttt{mincut} problem. Another angle from which to view $\mathcal{C}_1$ is that of randomized cuts.

    \begin{theorem}[$\mathcal{C}_1$ via randomized cuts]\label{thm:mincuts-random-cuts}
        Let $\mu,\nu\in\mathsf{P}(V)$ and $\phi\in\mathbb{R}^n$. Let $T\sim \mathrm{Unif}([0,\|\phi\|_\infty])$ and set $A_T = \{i\in V: \phi_i\geq T\}$ and $B_T = \{i\in V: \phi_i< T\}$. Then $\mathtt{mincut}(\mu,\nu)$ may be recast in terms of randomized cuts:
            \begin{align*}
                \mathtt{mincut}(\mu,\nu)=
                \begin{cases}
                    \text{minimize }& \frac{\mathbb{E}_T({\#E(A_T, B_T)})}{\mathbb{E}_T({\sum_{i\in A_T}(\mu_i-\nu_i)})} \\
                    \text{subject to }&\phi \geq 0\\
                    &\phi^T(\mu-\nu) = 1
                \end{cases}.
            \end{align*}
    \end{theorem}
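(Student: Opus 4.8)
The plan is to prove the identity for the final statement in two moves: a coarea (layer-cake) computation that evaluates the two expectations appearing in the ratio, followed by a reduction of the resulting program to the linear program $\mathtt{mincut}(\mu,\nu)$ of \cref{th:grl-max-flow-min-cut}.

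\textbf{Step 1 (coarea computation).} Fix $\phi$ feasible for the right-hand program, so $\phi\geq 0$ and $\phi^T(\mu-\nu)=1$; in particular $\|\phi\|_\infty>0$ and every coordinate lies in $[0,\|\phi\|_\infty]$. For an undirected edge $\{i,j\}\in E$, assume without loss of generality $\phi_i\geq\phi_j$; then $\{i,j\}$ is cut by $(A_T,B_T)$ exactly when $\phi_j<T\leq\phi_i$, which, since $T$ is uniform on $[0,\|\phi\|_\infty]$, occurs with probability $(\phi_i-\phi_j)/\|\phi\|_\infty=|\phi_i-\phi_j|/\|\phi\|_\infty$; ties $\phi_i=\phi_j$ contribute zero, consistently with the undirected count $\#E(A_T,B_T)$. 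By linearity of expectation, $\mathbb{E}_T(\#E(A_T,B_T))=\|\phi\|_\infty^{-1}\sum_{\{i,j\}\in E}w_{ij}|\phi_i-\phi_j|$. Likewise $i\in A_T$ with probability $\phi_i/\|\phi\|_\infty$, so $\mathbb{E}_T(\sum_{i\in A_T}(\mu_i-\nu_i))=\|\phi\|_\infty^{-1}(\mu-\nu)^T\phi=\|\phi\|_\infty^{-1}$, which is positive. Dividing, the objective in the claimed formulation equals $\sum_{\{i,j\}\in E}w_{ij}|\phi_i-\phi_j|$ on the feasible set, so the randomized-cut program equals $\min\{\sum_{\{i,j\}\in E}w_{ij}|\phi_i-\phi_j|:\phi\geq 0,\ \phi^T(\mu-\nu)=1\}$.

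\textbf{Step 2 (reduction to $\mathtt{mincut}$).} In the linear program \eqref{eq:linear-program-c1}, for fixed $\psi$ the optimal slacks are $k_{ij}=(\psi_i-\psi_j)_+$; summing the two orientations of each $\{i,j\}\in E$ gives $w_{ij}\big((\psi_i-\psi_j)_+ + (\psi_j-\psi_i)_+\big)=w_{ij}|\psi_i-\psi_j|$, hence $\mathtt{mincut}(\mu,\nu)=\min\{\sum_{\{i,j\}\in E}w_{ij}|\psi_i-\psi_j|:(\mu-\nu)^T\psi\geq 1\}$. Since the feasible set of the Step~1 program is contained in this larger one with matching objective, $\mathtt{mincut}(\mu,\nu)$ is at most the Step~1 optimal value. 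For the reverse inequality, take any $\psi$ with $(\mu-\nu)^T\psi=c\geq 1$: replacing $\psi$ by $\psi/c$ scales the objective by $1/c\leq 1$ and achieves $c=1$, using positive homogeneity of the objective; then replacing $\psi$ by $\psi-(\min_i\psi_i)\mathbf{1}$ leaves both the objective and the constraint value unchanged, because $\mu,\nu\in\mathsf{P}(V)$ forces $(\mu-\nu)^T\mathbf{1}=0$. The result is feasible for the Step~1 program with no larger objective, so the two optimal values coincide, which is the assertion.

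The coarea bookkeeping of Step~1 is routine; the points needing a little care are its consistency with the half-open definition of $A_T,B_T$, so that each undirected edge is counted once and ties contribute zero, and the verification that the denominator $\mathbb{E}_T(\sum_{i\in A_T}(\mu_i-\nu_i))$ never vanishes on the feasible set. The main — mild — obstacle is the reduction in Step~2, where one must exploit both the positive homogeneity of the objective to enforce equality in $\phi^T(\mu-\nu)=1$ and the shift-invariance coming from $\mu,\nu$ being probability measures to enforce $\phi\geq 0$, and check that the degenerate case $\phi\equiv\mathbf{0}$ is simply infeasible.
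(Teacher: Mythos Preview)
Your proof is correct and follows essentially the same approach as the paper: the coarea computation in Step~1 matches the paper's argument line by line, and your Step~2 supplies an explicit justification (via scaling and shift-invariance) for the fact that the paper simply asserts, namely that adding the constraint $\phi\geq 0$ leaves the optimal value of $\mathcal{C}_1$ unchanged.
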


    We illustrate the measure \texttt{mincut} problem on a toy dataset in~\cref{fig:illustration-cuts-digit}. When $p=2$, we may recast $\mathcal{C}_2$ as a relaxation of a variant of the normalized cut problem defined via the labels $\mu,\nu$, stated as a remark below.
    
    \begin{remark}[$\mathcal{C}_2$ via normalized cuts]\label{rmk:normalized-measure-cut}
        Let $\mu,\nu\in\mathsf{P}(V)$. We consider the following measure normalized cut problem:
            \begin{align}\label{eq:measure-ncut}
                \mathtt{ncut}({\mu,\nu})&=  \begin{cases}
                    \text{minimize }& \frac{1}{2}\bigg{\{} \frac{|\#E(A,A^c)| }{\sum_{i\in A}(\mu_i-\nu_i)}\\
                    &\qquad+ \frac{|\#E(A,A^c)| }{\sum_{i\in A^c}(\nu_i-\mu_i)} \bigg{\}} \\
                    \text{subject to }& \varnothing \subsetneq A \subsetneq V\\
                    &\mathbf{1}_A^T(\mu-\nu) >0
                \end{cases}.
            \end{align}
        Then $\mathcal{C}_2(\mu,\nu)$ is a quadratic program with objective $\phi^T L\phi$ and can be realized as a linear relaxation of \cref{eq:measure-ncut} by considering the indicator vector of the set $A$ and relaxing this to $\phi^T(\mu-\nu) = 1$.
    \end{remark}   

    \begin{figure}[t!]
        \begin{center}
            \includegraphics[width=0.75\textwidth]{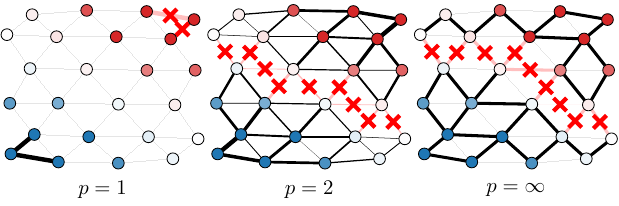}
        \end{center}\caption{Visualizing solutions $\phi$ to $\cp$ for $p \in [1,\infty]$. Measures $\mu, \nu$ appear in blue and red, with opacity proportional to value. Edge width is proportional to $|\phi_i - \phi_j|$. ``X'' symbols mark edges crossing $\{\phi \le T\}$ and $\{\phi > T\}$, where $T$ is the mean of $\phi$.}
        \label{fig:illustration-cuts-demo}
    \vspace{-0.4cm}
    \end{figure}

    The $p$-conductance problem is also related to a family of optimal transportation metrics on $\mathsf{P}(V)$. Define the $p$-Beckmann metric (see \cite{beckmann1952continuous}), denoted $\mathcal{B}_{w, p}(\mu,\nu)$, as the optimal value of the following program, where $1\leq p<\infty$:
        \begin{align}\label{eq:defn-beckmann-1}\tag{$\mathcal{B}_{w, p}$}
            \mathcal{B}_{w, p}(\mu,\nu) &= \min_{\substack{J\in\mathbb{R}^{N} \\ BJ = \mu-\nu}}(\sum_{\{i, j\} \in E}w_{ij}|J(i, j)|^p)^{1/p}.
        \end{align}
    When $p=\infty$, we set
        \begin{align}\label{eq:defn-beckmann-2}\tag{$\mathcal{B}_{w, \infty}$}
            \mathcal{B}_{w, \infty}(\mu,\nu) &= \min_{\substack{J\in\mathbb{R}^{N} \\ BJ = \mu-\nu}}\max_{\{i, j\} \in E}w_{ij}|J(i, j)|.
        \end{align}
    Here, $B$ is the node-edge oriented incidence matrix of $G$. This class of problems was considered in, for example,~\cite{robertson2024all} as a family of distances that are related to the effective resistance metric of the underlying graph. From this setup we have the following theorem relating $\mathcal{B}_{w, p}(\mu,\nu)$ to $\cp$.

    \begin{theorem}\label{thm:generalized-gauge-duality}
        Let $\mu,\nu\in\mathsf{P}(V)$ (with $\mu\neq\nu$). Then the optimal values of $\cp$ and $\mathcal{B}_{w, p}$ are related as follows:
            \begin{align*}
                \mathcal{C}_p&(\mu,\nu) = 
                \begin{cases}
                    1 / \mathcal{B}_{\infty, w^{-1}}(\mu,\nu) &\text{ if }p=1,\\
                    1 / \mathcal{B}_{q, w^{1-q}}(\mu,\nu) &\text{ if }p\in(1, \infty)\\
                    &\text{ and }1/p+1/q=1,\\
                    1 / \mathcal{B}_{1, w^{-1}}(\mu,\nu) &\text{ if }p=\infty.
                \end{cases}
            \end{align*}
    \end{theorem}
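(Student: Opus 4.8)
The plan is to read $\cp(\mu,\nu)$ as the reciprocal of a gauge‑dual (polar) norm of $\mu-\nu$, and then identify that polar norm with the Beckmann energy by a subspace‑duality computation. Let $B\in\mathbb{R}^{n\times N}$ be an oriented incidence matrix of $G$, so that $(B^\top\phi)_e=\phi_i-\phi_j$ on the edge $e=\{i,j\}$, let $W=\operatorname{diag}(w_e)$, and set $\Phi_p(\phi)=\|W^{1/p}B^\top\phi\|_p$ for $1\le p<\infty$ and $\Phi_\infty(\phi)=\|WB^\top\phi\|_\infty$; then $\cp(\mu,\nu)=\min\{\Phi_p(\phi):(\mu-\nu)^\top\phi=1\}$. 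Each $\Phi_p$ is a seminorm whose kernel is $\ker B^\top$ (the span of the component indicators, hence $\mathbf 1$ when $G$ is connected); moreover $\mu-\nu\perp\mathbf 1$ since $\mu,\nu$ are probability vectors, and $\Phi_p(\mu-\nu)>0$ since $\mu\neq\nu$.

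I would then invoke the elementary gauge‑duality identity: for a seminorm $\Phi$ with $\Phi(c)>0$ and $c\perp\ker\Phi$, one has $\min\{\Phi(\phi):c^\top\phi=1\}=1/\Phi^{\circ}(c)$, where $\Phi^{\circ}(c):=\max\{c^\top\phi:\Phi(\phi)\le 1\}$, the maximum being attained because $\{\Phi\le 1\}$ is compact modulo $\ker\Phi$ and $c^\top(\cdot)$ descends to that quotient; the short homogeneity argument (rescale a feasible $\phi$ by $1/\Phi(\phi)$ in one direction, and the optimal $\phi^\ast$ of $\Phi^\circ(c)$ by $1/\Phi^\circ(c)$ in the other) gives both inequalities. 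In degenerate or disconnected cases the identity persists in the extended reals with $1/\infty=0$, which matches both $\cp$ and $\mathcal B_{w,\cdot}$ there. It therefore remains to compute $\Phi_p^{\circ}(\mu-\nu)$.

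For the computation, fix any flow $J_0$ with $BJ_0=\mu-\nu$, which exists because $\mu-\nu\in\{\mathbf 1\}^\perp=\operatorname{im}B$; and observe that for every $\phi$, writing $u=B^\top\phi\in\operatorname{im}(B^\top)=(\ker B)^\perp$, we have $(\mu-\nu)^\top\phi=(BJ_0)^\top\phi=J_0^\top u$, independently of the choice of $J_0$. Substituting $v=W^{1/p}u$ (respectively $v=Wu$ when $p=\infty$), which ranges over the subspace $S:=W^{1/p}\operatorname{im}(B^\top)$, and writing $J_0^\top u=(W^{-1/p}J_0)^\top v$, turns $\Phi_p^{\circ}(\mu-\nu)$ into $\max\{(W^{-1/p}J_0)^\top v:v\in S,\ \|v\|_p\le 1\}$. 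Here I apply the standard fact that the polar of the $\ell_p$‑ball restricted to a subspace is a quotient $\ell_q$‑norm, $\max\{a^\top v:v\in S,\ \|v\|_p\le 1\}=\min\{\|a-z\|_q:z\in S^\perp\}$, proved by Hölder in one direction and a norm‑preserving Hahn--Banach extension in the other. Since $z\perp W^{1/p}B^\top\phi$ for all $\phi$ exactly when $BW^{1/p}z=0$, we get $S^\perp=W^{-1/p}\ker B$; writing $z=W^{-1/p}k$ and reparametrizing $J:=J_0-k$, the set $\{J_0-k:k\in\ker B\}$ equals $\{J:BJ=\mu-\nu\}$, so $\Phi_p^{\circ}(\mu-\nu)=\min\{\|W^{-1/p}J\|_q:BJ=\mu-\nu\}$.

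Finally I read off the three cases from this one formula. For $1<p<\infty$, $\|W^{-1/p}J\|_q^q=\sum_e w_e^{-q/p}|J_e|^q$, and $1/p+1/q=1$ gives $q/p=q-1$, so $\Phi_p^{\circ}(\mu-\nu)=\min\{(\sum_e w_e^{1-q}|J_e|^q)^{1/q}:BJ=\mu-\nu\}$, i.e.\ the Beckmann energy of exponent $q$ with edge weights $w_e^{1-q}$ (the middle case of \eqref{eq:defn-beckmann-1}). For $p=1$ (so $q=\infty$), using $W^{1/1}=W$, the same chain yields $\Phi_1^{\circ}(\mu-\nu)=\min\{\max_e w_e^{-1}|J_e|:BJ=\mu-\nu\}$, the $\infty$‑Beckmann energy of \eqref{eq:defn-beckmann-2} with weights $w_e^{-1}$; and for $p=\infty$ (so $q=1$), using $\Phi_\infty(\phi)=\|WB^\top\phi\|_\infty$ (hence $a=W^{-1}J_0$, $S=W\operatorname{im}(B^\top)$, $S^\perp=W^{-1}\ker B$) gives $\Phi_\infty^{\circ}(\mu-\nu)=\min\{\sum_e w_e^{-1}|J_e|:BJ=\mu-\nu\}$, the $1$‑Beckmann energy with weights $w_e^{-1}$. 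In each case $\cp(\mu,\nu)=1/\Phi_p^{\circ}(\mu-\nu)$ is exactly the asserted value. I expect the main obstacle not to be any single deep step but rather the weight bookkeeping that makes the two normalizations ($W^{1/p}$ for $p<\infty$ versus $W$ for $p=\infty$, which is why the three cases look different) come out consistently, together with making the gauge‑duality reduction rigorous even though $\Phi_p$ is only a seminorm — which is exactly where connectedness of $G$ and $\mu\neq\nu$ enter; the subspace $\ell_p$--$\ell_q$ duality and the Hölder estimates are routine.
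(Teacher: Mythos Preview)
Your proof is correct and rests on the same underlying idea as the paper's---gauge (polar) duality for linearly constrained norm minimization---but the execution is organized differently and is more self-contained. The paper casts the Beckmann problem as an instance of the abstract primal gauge program $\min\{\kappa(J):\rho(BJ-(\mu-\nu))\le 0\}$, computes the polar of the weighted $\ell_q$ norm via a H\"older lemma (their Lemma~\ref{lemma:norm-duality}), and then invokes the strong gauge duality theorem of \cite{friedlander2014gauge} to obtain $\mathcal{C}_p$ as the gauge dual with optimal values in reciprocal relation. You instead start from $\mathcal{C}_p$, prove the scalar identity $\min\{\Phi(\phi):c^\top\phi=1\}=1/\Phi^\circ(c)$ directly by the two-line homogeneity/rescaling argument, and then compute $\Phi_p^\circ(\mu-\nu)$ by the subspace $\ell_p$--$\ell_q$ duality $\max\{a^\top v:v\in S,\ \|v\|_p\le 1\}=\min\{\|a-z\|_q:z\in S^\perp\}$, which after the substitution $J=J_0-k$ collapses exactly onto the Beckmann minimization over $\{J:BJ=\mu-\nu\}$. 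What your route buys is that no external black box is needed: the seminorm issue (kernel of $B^\top$), the role of connectedness, and where $\mu\neq\nu$ enters are all visible in the argument, and the weight bookkeeping $w_e^{-q/p}=w_e^{1-q}$ is done in one place. What the paper's route buys is brevity once one accepts the cited gauge duality framework, and a cleaner separation between ``dual norm computation'' and ``strong duality''.
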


    The distinctive appearance of reciprocality between the programs $\cp$ and $\mathcal{B}_{w, p}$ follows from the strong gauge duality for linearly constrained norm optimization problems (see~\cite{friedlander2014gauge}, a more detailed discussion is given in \cref{subsec:gauge}). By recalling the well-known fact that $\mathcal{B}_{1, w}(\mu, \nu)$ is the 1-Wasserstein metric between $\mu,\nu$ when the ground metric on $V$ is weighted shortest path distance (see, e.g., \cite{peyre2019computational}), and which we denote $\mathcal{W}_{1, w}(\mu, \nu)$, we may connect $\mathcal{C}_\infty$ to a certain 1-Wasserstein metric, as follows.

    \begin{corollary}\label{cor:w1}
        Let $\mu,\nu\in\mathsf{P}(V)$ with $\mu\neq\nu$. Then it holds
            \begin{align*}
                \mathcal{C}_{\infty}(\mu,\nu) &= 1 / \mathcal{W}_{1, w^{-1}}(\mu, \nu).
            \end{align*}
        That, is $\mathcal{C}_{\infty}(\mu,\nu)$ is the reciprocal of the optimal transportation distance between $\mu,\nu$ when the ground metric on $V$ is taken to be shortest-path distance weighted by $w^{-1}$. 
    \end{corollary}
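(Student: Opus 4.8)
The plan is to derive the corollary by combining two facts: the $p=\infty$ branch of \cref{thm:generalized-gauge-duality}, which already states $\mathcal{C}_\infty(\mu,\nu) = 1/\mathcal{B}_{1, w^{-1}}(\mu,\nu)$, and the classical identification of the graph Beckmann problem with an optimal transport distance. Here $\mathcal{B}_{1,w^{-1}}(\mu,\nu)$ is the minimum-cost flow value $\min_{BJ=\mu-\nu}\sum_{\{i,j\}\in E} w_{ij}^{-1}|J(i,j)|$, so the only thing left to verify is that this quantity equals $\mathcal{W}_{1,w^{-1}}(\mu,\nu)$, the $1$-Wasserstein distance on $V$ whose ground metric is the $w^{-1}$-weighted shortest-path distance $d_{w^{-1}}$ (edge $\{i,j\}$ assigned length $1/w_{ij}$).

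For the inequality $\mathcal{B}_{1,w^{-1}}(\mu,\nu)\le \mathcal{W}_{1,w^{-1}}(\mu,\nu)$ I would take an optimal coupling $\pi\in\mathsf{P}(V\times V)$ realizing $\mathcal{W}_{1,w^{-1}}(\mu,\nu)$, fix for each pair $(a,b)$ with $\pi(a,b)>0$ a $w^{-1}$-shortest path from $a$ to $b$, route the mass $\pi(a,b)$ along it, and sum the resulting signed contributions over all pairs to obtain a flow $J$ with $BJ=\mu-\nu$; by the triangle inequality applied edgewise, $\sum_{\{i,j\}\in E} w_{ij}^{-1}|J(i,j)| \le \sum_{a,b}\pi(a,b)\, d_{w^{-1}}(a,b) = \mathcal{W}_{1,w^{-1}}(\mu,\nu)$. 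For the reverse inequality, given an optimal $J$ I would apply a standard flow decomposition, writing $J$ as a sum of simple path flows from $\supp(\mu)$ toward $\supp(\nu)$ plus cycle flows; discarding the cycles does not increase $\sum_{\{i,j\}\in E} w_{ij}^{-1}|J(i,j)|$, and the surviving path flows induce a coupling $\pi$ whose transport cost is bounded above by the flow cost, because each path has $w^{-1}$-length at least $d_{w^{-1}}$ between its endpoints. Hence $\mathcal{W}_{1,w^{-1}}(\mu,\nu)\le \mathcal{B}_{1,w^{-1}}(\mu,\nu)$, so the two are equal. Alternatively one may simply cite \cite{peyre2019computational} (or \cite{robertson2024all}) for this well-known equivalence, as the lead-in to the corollary already does.

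Substituting $\mathcal{B}_{1,w^{-1}}(\mu,\nu) = \mathcal{W}_{1,w^{-1}}(\mu,\nu)$ into the $p=\infty$ case of \cref{thm:generalized-gauge-duality} gives $\mathcal{C}_\infty(\mu,\nu) = 1/\mathcal{W}_{1,w^{-1}}(\mu,\nu)$; the assumption $\mu\neq\nu$ is what makes the right-hand side meaningful (and matches the conventions under which the reciprocal in \cref{thm:generalized-gauge-duality} is stated, including the degenerate case where $\mu,\nu$ live on different components and both sides are $0 = 1/\infty$). The only genuinely delicate step is the sign bookkeeping in the flow-decomposition direction $\mathcal{W}\le\mathcal{B}$ — one must check that cancellation along cycles can only decrease the $\ell_1$-type cost — but this is entirely routine, so modulo that the corollary is an immediate consequence of the gauge-duality theorem together with the Beckmann–Wasserstein identity.
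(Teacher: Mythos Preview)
Your proposal is correct and matches the paper's approach exactly: the paper derives the corollary by combining the $p=\infty$ case of \cref{thm:generalized-gauge-duality} with the well-known identification $\mathcal{B}_{1,w}=\mathcal{W}_{1,w}$, which it simply cites from \cite{peyre2019computational} in the text immediately preceding the corollary rather than proving. Your sketch of the Beckmann--Wasserstein identity via coupling-to-flow and flow-decomposition is a standard proof of that cited fact and is more detail than the paper itself provides, but the overall logic is identical.
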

    
    When the weights $w_{ij}$ are conductances or affinities between the nodes and/or their features, the shortest path metric with weights $w_{ij}^{-1}$ becomes an inverse affinity metric. Moreover, we note that since $B_{w^{-1}, 2}(\mu,\nu)^2 = (\mu-\nu)^T L^\dagger (\mu-\nu)$, where $L^\dagger$ is the Moore-Penrose pseudoinverse of $L$, we may relate $\mathcal{C}_2(\mu,\nu)$ to the effective resistance between the measures.
    \begin{corollary}\label{cor:er}
        Assume $G$ is connected and let $\mu,\nu\in\mathsf{P}(V)$ (with $\mu\neq\nu$). Then it holds
            \begin{align}\label{eq:er}
                \mathcal{C}_{2}(\mu,\nu)^2 &= 1 / (\mu-\nu)^T L^\dagger (\mu-\nu).
            \end{align}
        A minimizer $\phi^\ast$ for $\mathcal{C}_{2}(\mu,\nu)$ achieving \cref{eq:er} is 
            \begin{align}\label{eq:p2soln}
                \phi^\ast = \frac{1}{(\mu-\nu)^\top L^\dagger (\mu-\nu)}L^\dagger (\mu-\nu).
            \end{align}
    \end{corollary}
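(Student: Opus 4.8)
The plan is to reduce the scalar identity to the $p=2$ instance of \cref{thm:generalized-gauge-duality} together with the electrical-energy formula for the quadratic Beckmann problem, and then to certify the explicit minimizer by a short computation with $L^\dagger$. First I would specialize \cref{thm:generalized-gauge-duality} to $p=2$: the relation $1/p+1/q=1$ forces $q=2$ and $w^{1-q}=w^{-1}$, so the theorem gives $\mathcal{C}_2(\mu,\nu)=1/\mathcal{B}_{w^{-1},2}(\mu,\nu)$. Next I would invoke the stated identity $\mathcal{B}_{w^{-1},2}(\mu,\nu)^2=(\mu-\nu)^\top L^\dagger(\mu-\nu)$, i.e. the minimum‑energy (electrical‑flow) characterization: the program $\min\{\sum_{\{i,j\}\in E} w_{ij}^{-1}J(i,j)^2 : BJ=\mu-\nu\}$ is solved, by its KKT conditions, by the flow $J=W B^\top L^\dagger(\mu-\nu)$, whose energy equals $(\mu-\nu)^\top L^\dagger(\mu-\nu)$ after using $L=BWB^\top$ and $L^\dagger L L^\dagger=L^\dagger$. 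Combining these two facts and squaring yields $\mathcal{C}_2(\mu,\nu)^2 = 1/(\mu-\nu)^\top L^\dagger(\mu-\nu)$.

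Before taking the reciprocal I would record that it is legitimate: since $G$ is connected, $\ker L=\mathrm{span}\{\mathbf{1}\}$, and since $\mu,\nu$ are probability vectors $\mathbf{1}^\top(\mu-\nu)=0$, so $\mu-\nu\in\mathrm{range}(L)$; as $\mu\neq\nu$ the vector $\mu-\nu$ is a nonzero element of $\mathbf{1}^\perp$, on which $L^\dagger$ is positive definite, whence $(\mu-\nu)^\top L^\dagger(\mu-\nu)>0$. This is exactly the nondegeneracy needed for \cref{eq:er} to make sense.

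It then remains to verify that $\phi^\ast=c^{-1}L^\dagger(\mu-\nu)$, with $c:=(\mu-\nu)^\top L^\dagger(\mu-\nu)$, attains the optimum of $\mathcal{C}_2$. Feasibility is immediate: $\phi^{\ast\top}(\mu-\nu)=c^{-1}(\mu-\nu)^\top L^\dagger(\mu-\nu)=1$. For the objective, recall $\sum_{\{i,j\}\in E}w_{ij}|\phi_i-\phi_j|^2=\phi^\top L\phi$, so $\sum_{\{i,j\}\in E}w_{ij}|\phi^\ast_i-\phi^\ast_j|^2=\phi^{\ast\top}L\phi^\ast=c^{-2}(\mu-\nu)^\top L^\dagger L L^\dagger(\mu-\nu)=c^{-2}(\mu-\nu)^\top L^\dagger(\mu-\nu)=c^{-1}$, again by $L^\dagger L L^\dagger=L^\dagger$. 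Thus $\phi^\ast$ is feasible with objective value $c^{-1}=\mathcal{C}_2(\mu,\nu)^2$, hence a minimizer. (Alternatively, $\phi^\ast$ drops out of the KKT system of $\min\{\phi^\top L\phi:\phi^\top(\mu-\nu)=1\}$: stationarity gives $L\phi=\tfrac{\lambda}{2}(\mu-\nu)$, so $\phi=\tfrac{\lambda}{2}L^\dagger(\mu-\nu)$ up to an element of $\ker L$, which affects neither the objective nor—since $\mathbf{1}\perp(\mu-\nu)$—the constraint, and $\lambda$ is pinned down by feasibility.)

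I do not expect a genuine obstacle here; the only points needing care are the weight/exponent bookkeeping between the notation $\mathcal{B}_{w,p}$ and the conjugate pair $(q,w^{1-q})$ in \cref{thm:generalized-gauge-duality}, the verification that $\mu-\nu$ lies in the range of $L$ with strictly positive $L^\dagger$-energy, and the repeated appeal to $L^\dagger L L^\dagger=L^\dagger$. Everything else is substitution.
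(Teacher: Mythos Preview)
Your proposal is correct and follows the same route as the paper: the identity \eqref{eq:er} is deduced from \cref{thm:generalized-gauge-duality} at $p=q=2$ together with the effective-resistance formula $\mathcal{B}_{w^{-1},2}(\mu,\nu)^2=(\mu-\nu)^\top L^\dagger(\mu-\nu)$, which the paper records in the text immediately preceding the corollary and does not reprove. The paper does not spell out a separate verification of the minimizer \eqref{eq:p2soln}; your direct check via $\phi^{\ast\top}L\phi^\ast=c^{-2}(\mu-\nu)^\top L^\dagger L L^\dagger(\mu-\nu)=c^{-1}$ and the KKT alternative are both clean and fill that in. One small notational caution: in this paper $W$ denotes the weighted adjacency matrix, so when you write $L=BWB^\top$ you should instead use a diagonal edge-weight matrix (e.g.\ $\mathrm{diag}(w)$) to avoid clashing with the paper's conventions.
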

    
\subsection{Robustness of class labels to perturbations}\label{subsec:robustness}

    In this subsection we provide theoretical motivation for robustness of the class predictions obtained from $p$-conductances when $p=2$. The leading factor $\frac{1}{(\mu-\nu)^T L^+ (\mu-\nu)}$ in \cref{eq:p2soln} ensures that $\phi^\ast(\mu-\nu)=1$ and is always strictly positive when the underlying graph is connected. In the binary setting, class predictions $\widehat{y}_i$ can be obtained by writing $\widehat{y}_i = \mathrm{sign}(\phi^\ast)$. The omission of the leading factor $\frac{1}{(\mu-\nu)^T L^+ (\mu-\nu)}$ leads to identical class predictions, and thus for simplicity we study the stability of the vector $L^+(\mu-\nu)$ under label noise, which we model by fixing two vectors $\eta_1, \eta_2\in\mathbb{R}^n$ with $\eta_1, \eta_2\geq 0$ and $\mathbf{1}_n^T\eta_1 = \mathbf{1}_n^T\eta_2$. Then, we write
        \begin{align*}
            \psi &= L^+(\mu-\nu),\\
            \widetilde{\psi}&= L^+(\mu+\eta_1 - \nu-\eta_2) = L^+(\mu-\nu+\eta)
        \end{align*}
    where $\eta=\eta_1-\eta_2\in\mathbb{R}^n$ can therefore be taken to be any mean-zero vector. Note that we have the obvious bound $\|\psi - \widetilde{\psi}\|_2\leq \|\eta\|_2 / \lambda$, where $\lambda>0$ is the smallest positive eigenvalue of $L$.
    
    This bound is sharp when $\eta$ is an eigenvector of $L$, indicating that class predictions from noisy labels remain stable under small perturbations, especially in expander graphs. In practice, we find that applying the diffusion operator $e^{-tL}$ for small $t>0$ mitigates accuracy drop from label noise to some extent (see \cref{fig:corrupted-labels-1}). Theoretically, under assumptions on $\eta$ and $t$, the worst-case error bound can be improved.
    
    \begin{theorem}\label{thm:robustness}
        Let $\mu,\nu\in\mathsf{P}(V)$ be fixed, and let $\psi = L^+(\mu-\nu)$. Let $\eta\in\mathbb{R}^n$ be a fixed vector with $\mathbf{1}_n^T \eta = 0$, and for $t\geq 0$ let $\widetilde{\psi}_t = L^+e^{-tL}(\mu-\nu + \eta)$. Then we have
            \begin{align}\label{eq:improved-bound}
                \|\psi - \widetilde{\psi}_t\|_2 \leq t\|\mu-\nu\|_2 + \lambda^{-1}e^{-t\lambda}\|\eta\|_2,
            \end{align}
        In particular, if $\frac{\|\eta\|_2}{\|\mu-\nu\|_2} > 1$ and $t\in(0, \lambda^{-1} ( \frac{\|\eta\|_2}{\|\mu-\nu\|_2} - 1) )$, the worst-case error is improved, i.e.,
            \begin{align*}
                t\|\mu-\nu\|_2 + \lambda^{-1}e^{-t\lambda}\|\eta\|_2 < \|\eta\|_2/\lambda.
            \end{align*}
    \end{theorem}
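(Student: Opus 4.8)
The plan is to split the error $\psi - \widetilde\psi_t$ into two pieces by inserting the intermediate vector $L^+ e^{-tL}(\mu-\nu)$, and bound each separately using the spectral calculus associated with $L$. Write
\[
    \psi - \widetilde\psi_t = \underbrace{L^+(\mu-\nu) - L^+ e^{-tL}(\mu-\nu)}_{(\mathrm{I})} \;-\; \underbrace{L^+ e^{-tL}\eta}_{(\mathrm{II})}.
\]
For term $(\mathrm{I})$, I would write $L^+(I - e^{-tL}) = \int_0^t e^{-sL}\,ds$ (or argue eigenvalue-by-eigenvalue: on the eigenspace for eigenvalue $\lambda_k>0$ the operator acts by $(1-e^{-t\lambda_k})/\lambda_k$, and on $\ker L = \operatorname{span}(\mathbf 1_n)$ it is zero), and observe that the function $x \mapsto (1-e^{-tx})/x$ is bounded above by $t$ for all $x\ge 0$. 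Hence $\|(\mathrm{I})\|_2 \le t\,\|\mu-\nu\|_2$; the projection onto $\ker L$ is harmless because it only kills components, and in fact $\mathbf 1_n^\top(\mu-\nu)=0$ anyway.

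For term $(\mathrm{II})$, since $\mathbf 1_n^\top \eta = 0$, the vector $\eta$ lies in the range of $L$, so $L^+ e^{-tL}\eta = e^{-tL} L^+ \eta$ has no component in $\ker L$; expanding in the eigenbasis, the operator $L^+ e^{-tL}$ acts on the $\lambda_k$-eigenspace by the scalar $e^{-t\lambda_k}/\lambda_k$, and for $\lambda_k \ge \lambda$ this is at most $e^{-t\lambda}/\lambda$ (the function $x\mapsto e^{-tx}/x$ is decreasing on $x>0$). Therefore $\|(\mathrm{II})\|_2 \le \lambda^{-1} e^{-t\lambda}\|\eta\|_2$. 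Combining the two bounds with the triangle inequality gives \eqref{eq:improved-bound}.

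For the second assertion, I would simply treat $g(t) = t\|\mu-\nu\|_2 + \lambda^{-1} e^{-t\lambda}\|\eta\|_2$ as a function of $t\ge 0$ and compare it to $g(0) = \lambda^{-1}\|\eta\|_2$. Either compute $g'(0) = \|\mu-\nu\|_2 - \|\eta\|_2 < 0$ under the hypothesis $\|\eta\|_2 > \|\mu-\nu\|_2$ and note $g$ is convex so it stays below $g(0)$ on an initial interval; or more directly, rearrange the desired inequality $g(t) < g(0)$ into $t\|\mu-\nu\|_2 < \lambda^{-1}(1 - e^{-t\lambda})\|\eta\|_2$ and use $1 - e^{-t\lambda} \ge t\lambda - \tfrac{1}{2}(t\lambda)^2 \ge t\lambda(1 - \tfrac{1}{2}t\lambda)$, which reduces the claim to $t < \lambda^{-1}(\|\eta\|_2/\|\mu-\nu\|_2 - 1)$ after absorbing the quadratic correction — here a slightly cleaner route is to use the crude bound $1-e^{-t\lambda} > t\lambda\, e^{-t\lambda}$ is false, so I will instead verify that on $(0, \lambda^{-1}(\|\eta\|_2/\|\mu-\nu\|_2 - 1))$ the linear term $t\|\mu-\nu\|_2$ is dominated by $\lambda^{-1}(1-e^{-t\lambda})\|\eta\|_2$ via monotonicity/convexity of $g$, matching the stated open interval.

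The only mildly delicate point is the handling of $\ker L$ and the pseudoinverse in term $(\mathrm{I})$: one must make sure that $L^+(I - e^{-tL})$ is genuinely bounded by $t$ including on the zero eigenspace (where the naive ratio $0/0$ appears), which is why the integral representation $L^+(I-e^{-tL}) = \int_0^t e^{-sL}\,ds$ restricted to $(\ker L)^\perp$ — equivalently $\int_0^t (e^{-sL} - P_{\ker L})\,ds$ on all of $\mathbb R^n$ — is the cleanest device, since its operator norm on $(\ker L)^\perp$ is exactly $\sup_{\lambda_k>0}(1-e^{-t\lambda_k})/\lambda_k \le t$. Everything else is routine spectral bookkeeping, and the comparison argument in the last paragraph is elementary single-variable calculus.
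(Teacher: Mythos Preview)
Your argument for the bound \eqref{eq:improved-bound} is correct and essentially identical to the paper's: both split $\psi - \widetilde\psi_t$ into the two pieces $L^+(I-e^{-tL})(\mu-\nu)$ and $L^+e^{-tL}\eta$ and bound each via the spectral calculus of $L$, using $(1-e^{-tx})/x \le t$ for the first and $e^{-tx}/x \le e^{-t\lambda}/\lambda$ for $x\ge\lambda$ on the second. Your integral representation is a pleasant alternative way to see the first bound; the paper simply invokes $e^x \ge 1+x$.

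The second assertion is where your proposal has a genuine gap. None of the routes you sketch actually delivers the stated interval $(0,\lambda^{-1}(\|\eta\|_2/\|\mu-\nu\|_2 - 1))$. The convexity-plus-$g'(0)<0$ argument shows $g(t)<g(0)$ on \emph{some} initial interval but does not identify its right endpoint. The Taylor bound $1-e^{-x}\ge x - x^2/2$ leads, after rearrangement, to the sufficient condition $t<2\lambda^{-1}(1-\|\mu-\nu\|_2/\|\eta\|_2)$, which fails to cover the full claimed interval once $\|\eta\|_2/\|\mu-\nu\|_2>2$. And the inequality $1-e^{-x} > xe^{-x}$ is in fact \emph{true} for $x>0$ (it is equivalent to $e^x>1+x$), not false as you wrote; using it would yield only the smaller interval $t<\lambda^{-1}\log(\|\eta\|_2/\|\mu-\nu\|_2)$. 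The paper's device is the bound $e^{-x}\le 1/(1+x)$: writing $\alpha = \|\mu-\nu\|_2/\|\eta\|_2 \in (0,1)$ and $x=t\lambda$, the desired inequality $\alpha x + e^{-x} < 1$ follows from $\alpha x + 1/(1+x) < 1$, which rearranges \emph{exactly} to $x < 1/\alpha - 1$. That single elementary inequality is the missing ingredient.
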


    \begin{remark}
        The condition $\frac{\|\eta\|_2}{\|\mu-\nu\|_2} > 1$ can be interpreted as asking that the perturbation $\eta$ not be too small in order for the diffusion step to have a boosting effect (since the diffusion step inevitably blurs the clean labels as well). For example, let $\mu,\nu$ correspond to disjoint sets of labeled nodes of equal size $m \geq 1$, say $A, B\subseteq V$; i.e.,
            \begin{align*}
                \mu = \frac{1}{m}\mathbf{1}_A,\quad \nu = \frac{1}{m}\mathbf{1}_B.
            \end{align*} 
        Suppose we take $m_1, m_2 < m$ labeled nodes, given by sets $A'\subseteq A, B'\subseteq B$ and corrupt them, setting
            \begin{align*}
                \widetilde{\mu} = \mu - \frac{1}{m}\mathbf{1}_{A'} + \frac{1}{m}\mathbf{1}_{B'},\quad \widetilde{\nu} = \nu + \frac{1}{m}\mathbf{1}_{A'} - \frac{1}{m}\mathbf{1}_{B'}.
            \end{align*}
        Then we have $\eta = 2(\frac{1}{m}\mathbf{1}_{B'} - \frac{1}{m}\mathbf{1}_{A'})$ which satisfies $\|\eta\| = \frac{2}{m}\sqrt{m_1+m_2}$, and thus $\frac{\|\eta\|}{\|\mu-\nu\|} > 1$ holds if and only if $m_1 + m_2 > \frac{1}{2}m$; that is, more than one quarter of the labels are corrupted. If this is the case, \cref{thm:robustness} guarantees that the worst-case $\ell_2$ distance between the clean potential and the potential associated with the corrupted data shrinks with the application of the diffusion operator.
    \end{remark}

\section{Algorithms for general $p \in [1, \infty]$}\label{sec:algorithms}
    
    In this section we present a semismooth Newton augmented Lagrangian algorithm for minimizing \eqref{eq:bcut-intro} for $p \in [1,\infty]$. We note that, as in \cref{cor:er}, solutions to $\mathcal{C}_{2}(\mu,\nu)$ may be obtained directly in terms of the pseudoinverse of $L$. Without loss of generality, we assume the first $m$ vertices $l = \{v_1, v_2,\ldots , v_m \}$ are assigned labels $\{y_1, y_2,\ldots , y_m\}$, where $0 < m \ll n$. In the context of $k$-class classification we take each $y_i$ to be one of the $k$ standard basis vectors $\{e_1, e_2,\ldots , e_k\}$ of the form $e_i = (0,\ldots 0,1,0,\ldots, 0)$, i.e. a one-hot row vector.  Let $M$ denote the number of unlabeled vertices, i.e. $M = n-m$.

    \begin{remark}\label{rmk:one-vs-all}
        The $k$-cut setting is similar to \eqref{eq:bcut-intro}. We define the following ``one-vs-all'' formulation of SSL with labels $\mu_1, \ldots, \mu_k$. Let $R_i = \mu_i - \frac{1}{k-1}\sum_{j\neq i} \mu_j$. We consider a constraint of the form $\text{diag}(\Phi^\top R) = \mathbf{1}_n$. Therefore our framework is, for $1\leq p<\infty$,
            \begin{align}\label{eq:kcut}
                \bar{\mathcal{C}}_p(\mu_1,\ldots, & \mu_k ;  R)^p = \notag\\
                & \min_{\substack{\Phi \in \mathbb{R}^{n\times k} \\ \text{diag}(\Phi^\top R) = \mathbf{1}_k }}  \sum_{\{i, j\} \in E}w_{ij}\|\Phi_i - \Phi_j\|_p^p,
            \end{align}
        and for $p=\infty$, we set
            \begin{align}\label{eq:kcut-2}
                \bar{\mathcal{C}}_\infty(\mu_1,\ldots, &\mu_k ; R)= \notag\\
                & \min_{\substack{\Phi \in \mathbb{R}^{n\times k} \\ \text{diag}(\Phi^\top R) = \mathbf{1}_k }}  \max_{\{i, j\} \in E} w_{ij}\|\Phi_i - \Phi_j\|_\infty.
            \end{align}
    \end{remark}
    
\subsection{Duality and optimality conditions}\label{subsec:duality-conditions}
    
    In this section we compute the Lagrangian dual to \eqref{eq:bcut-intro} and establish optimality conditions. Let $p\in[1,\infty]$ be fixed. First, we introduce the variable $u\in\mathbb{R}^{2N}$ with entries indexed by the oriented edges $E'$, and express the primal problem \eqref{eq:bcut-intro} in the following equivalent form
        \begin{align}\tag{Pr}\label{eq:bcut-primal}
            \min_{\phi \in \mathbb{R}^n, u \in \mathbb{R}^{N}} s(u)  \quad \text{s.t. } &\phi^\top (\mu - \nu) = 1 \\ 
            &B^\top \phi - u = 0\notag,
        \end{align}
    where $s(u) = \sum_{\{i, j\} \in E}w_{ij}|u_{ij}|^p$ for $1\leq p <\infty$ and with conventional modification for $p=\infty$, and where (with slight abuse) $B\in\mathbb{R}^{n\times 2N}$ is the node-edge oriented incidence matrix with respect to $E'$.  The dual problem is
        \begin{align}\tag{Du}\label{eq:bcut-dual}
            \min_{\substack{y\in\mathbb{R}\\ z\in\mathbb{R}^{N}}} s^*(u) - y \quad \text{s.t. } y(\mu - \nu) + B z = 0,
        \end{align}
    where $s^*$ is the Fenchel conjugate of $s(u)$. Now, denote by $\mathcal{L}$ the Lagrangian function for~\eqref{eq:bcut-primal}:
        \begin{align*}
            \mathcal{L}(\phi, u;y, z) = s(u) &+ \langle y, \phi^\top (\mu - \nu) - 1 \rangle
            + \langle z, B^\top \phi - u  \rangle.
        \end{align*}
    Furthermore, given $\sigma_1,\sigma_2 > 0$, define the augmented Lagrangian function associated with~\eqref{eq:bcut-primal}:
        \begin{align}\label{eq:augmented_lagrangian}
            \mathcal{L}_\sigma (\phi, u;y, z) = \mathcal{L}(\phi, u;y, z) &+ \frac{\sigma_1}{2}||B^\top\phi - u||^2 \notag\\
            &+ \frac{\sigma_2}{2}||\phi^\top (\mu - \nu) - 1||^2.
        \end{align}
    The KKT conditions for~\eqref{eq:bcut-primal} and~\eqref{eq:bcut-dual} are
        \begin{align}\tag{KKT}
            \begin{cases}
                y(\mu - \nu) + Bz&= 0\\
                z &\in \partial s(u) \\
                \phi^\top (\mu - \nu) - 1 &= 0 \\
                B^\top \phi - u &= 0.
            \end{cases}
        \end{align}
    Note that the condition $0 \in \partial_u \mathcal{L} = \partial s(u) -z \implies z\in\partial s(u)$ can be expressed using the proximal map:
        \begin{align*}
            \mathrm{prox}_{\lambda s}(v) = \argmin_u \left\{ \frac{1}{2}||u - v||^2 +\lambda s(u) \right\}.
        \end{align*}
    An equivalent characterization of the proximal operator is $0 \in \lambda \partial s(u) + u - v$. i.e., $u = v-\lambda z$ with $u \in \partial s(u)$ and the stationarity condition can be written
        \begin{align*}
            u - \mathrm{prox}_{\lambda s}(B^\top\phi + \frac{1}{\sigma_1} z) = 0.
        \end{align*}
    
\subsection{Semismooth Newton-CG augmented Lagrangian}\label{subsec:ssncg}

    The inexact ALM method below is a well known framework for solving convex composite optimization problems, and seeks solutions of~\eqref{eq:bcut-primal} by searching for saddle points of~\eqref{eq:augmented_lagrangian}:
        \begin{align*}
            \min_{\phi, u} \max_{y, z} \mathcal{L}_\sigma(\phi, u; y,z).
        \end{align*}
    \begin{algorithm}[H]\caption{Semismooth augmented Lagrangian (SSNAL)}\label{alg:alg1}
        \begin{algorithmic}[1]\label{alg:admm}
            \WHILE{not converged} 
            \STATE Compute \\
            $(\phi_{k+1}, u_{k+1}) \approx \argmin_{\phi, u} \mathcal{L}_\sigma (\phi, u; z_k, y_k)$
            \STATE Compute $z_{k+1} = z_k + (\sigma_1)_k (B^\top \phi_{k+1} - u_{k+1})$
            \STATE Compute $y_{k+1} = y_k + (\sigma_2)_k (\phi^\top (\mu - \nu) - 1)$
            \STATE Update $\sigma_{k+1} \leq \infty$
            \ENDWHILE
        \end{algorithmic}
    \end{algorithm}

    We design a semismooth Newton-CG algorithm to solve the expression in line 2 as follows. Let $p\in[1,\infty]$ be fixed. For a given $\sigma_1$ and $\sigma_2$ and $\tilde{z}$ and $\tilde{y}$, the subproblem is
        \begin{align}\label{eq:subproblem}
            \min_{\substack{\phi\in\mathbb{R}^n\\u\in\mathbb{R}^{N}}}\left\{ F(\phi, u) := \mathcal{L}_\sigma (\phi, u; \tilde{z}, \tilde{y}) \right\}. 
        \end{align}
    Since $F(\phi, u)$ is a strongly convex function, the level set $\{(\phi, u)|F(\phi, u) \leq \eta\}$ is a closed and bounded convex set for any $\eta \in \mathbb{R}$. Hence \eqref{eq:subproblem} admits a unique optimal solution which we denote as $(\bar{\phi}, \bar{u})$. Now, for any $\phi$, denote
        \begin{align*}
            f(\phi) &:= \inf_u F(\phi, u) \\
            &= \inf_u \left\{ s(u) + \frac{\sigma_1}{2}||u - (B^\top \phi + \frac{1}{\sigma_1}\tilde{z})||^2\right\} \\
            &\quad- \frac{1}{2\sigma_1}||\tilde{z}||^2 + \tilde{z}^\top B^\top \phi + \langle \tilde{y}, \phi^\top (\mu - \nu) - 1 \rangle \\
            &\quad+ \frac{\sigma_2}{2}||\phi^\top (\mu - \nu) - 1||^2
        \end{align*}
    We can compute $(\bar{\phi}, \bar{u}) = \argmin F(\phi, u)$ via
        \begin{align*}
            \bar{\phi} = \argmin f(\phi), \quad \bar{u} = \mathrm{prox}_{s/\sigma_1}(B^\top \bar{\phi} + \frac{1}{\sigma_1}\tilde{z})
        \end{align*}
    $f$ is strongly convex and continuously differentiable with
        \begin{align*}
            \nabla f(\phi) = &B\tilde{z} + \sigma_1 B(B^\top 
            \phi - \bar{u}) + \tilde{y}(\mu - \nu) \\
            &+ \sigma_2(\phi^\top (\mu - \nu) - 1)(\mu - \nu),    
        \end{align*}
    so $\bar{\phi}$ is the solution to the following nonsmooth equation:
        \begin{align}\label{eq:newtoncgsystem}
            \nabla f(\phi) = 0.
        \end{align}
    For any given $\phi \in \mathbb{R}^{n}$, we define
        \begin{align*}
            \hat{\partial}^2f(\phi) := \sigma_1 BB^\top - \sigma_1B\partial\bar{u}(\phi) + \sigma_2 (\mu - \nu)(\mu - \nu)^\top.
        \end{align*}
    Here, $\partial\bar{u}(\phi) = \partial \mathrm{prox}_{s/\sigma_1}(B^\top \bar{\phi} + \frac{1}{\sigma_1}\tilde{z})\cdot B^\top$ is the generalized Jacobian of $\bar{u}(\phi) = \mathrm{prox}_{s/\sigma_1}(B^\top \bar{\phi} + \frac{1}{\sigma_1}\tilde{z})$ composed with $B^\top$. Recall that if $u = \mathrm{prox}_{\lambda s}(v)$, we have $\lambda^{-1}(v-u) \in \partial s(u)$. This identity relates the subgradient of $s(u)$ to the prox operator. Thus, understanding $\partial\bar{u}(\phi)$ reduces to understanding the behavior of $\partial s(u)$ and the mapping $\mathrm{prox}_{\lambda s}(\cdot)$. We can thus consider two cases separately: 

    \noindent \textbf{Case $\mathbf{(p \in [1,\infty))}$:}  $s(u)$ is strictly convex and differentiable at all points where $u_i \neq 0$. The generalized Jacobian $\partial \bar{u}(\phi)$ simplifies to a diagonal weighting $D_{\bar{u}}$, with entries
        \begin{align*}
            (D_{\bar{u}})_{ii} = \frac{1}{1 + \lambda p w_i(p-1)|\bar{u}_u|^{p-2}}.
        \end{align*}

    \noindent \textbf{Case $\mathbf{(p = \infty)}$:} The subdifferential depends on the set of indices that achieve the maximum magnitude. Let
        \begin{align*}
            M = ||u||_\infty \text{ and } A = \{i : |u_i| = M\}.
        \end{align*}
    Then the subdifferential is the convex set
        \begin{align*}
            \{z \in \mathbb{R}^n : z_i = \theta_i,\:\: |\theta_i| \leq 1, \:\: \theta_i = \text{sign}(u_i) \text{ if } i \in A\}.
        \end{align*}

    \begin{remark}\label{rem:posdefhessian}
        If the underlying graph $G$ is connected, $H = \sigma_1(BB^\top - B D_{\bar{u}}B^\top) + \sigma_2(\mu-\nu)(\mu - \nu)^\top$ is symmetric and positive definite.
    \end{remark}
    
    Based on the preceding setup, we can thus present the following semismooth Newton-CG algorithm for solving \eqref{eq:newtoncgsystem}.

    \begin{algorithm}[H]\caption{Semismooth Newton conjugate gradient}\label{alg:alg2}
        %\begin{flushleft}
        %\end{flushleft}
        \begin{algorithmic}[1]
            \WHILE{$||\nabla f(\phi)|| \leq \epsilon$} 
            \STATE Compute an approximate Newton step, $\Delta\phi$ \\
            $$H \Delta\phi \approx -\nabla f(\phi)$$
            \STATE (Armijo rule given $r > 0$, $\sigma,\eta \in (0,1)$) Set $\zeta = \eta^{m}r$, where $m$ is the first nonnegative integer s.t.
            $$
            f(\phi_t) - f(\phi_t + \eta^{m}r\Delta \phi_t) \geq \sigma \eta_ts \langle \nabla f(\phi_t), \Delta \phi_t \rangle
            $$
            \STATE Set $\phi_{t+1} = \phi_t + \zeta \Delta \phi$
            \ENDWHILE
        \end{algorithmic}
    \end{algorithm}
    
\subsection{Analysis of SSNAL}\label{subsec:analysis}
    Due to remark \ref{rem:posdefhessian}, the conjugate gradient procedure is well-defined and a natural choice for solving the Newton system associated with \eqref{eq:newtoncgsystem}. The computational cost for conjugate gradient methods is highly dependent on the cost for computing the matrix-vector product $Hx$ for any given $x \in \mathbb{R}^d$. 
    \begin{remark}
        Consider the computation $Hx$ for $x \in \mathbb{R}^n$. The cost is $O(|E|)$ arithmetic operations. 
    \end{remark}
    In other words, the sparsity structure of the underlying graph characterizes the complexity of computing the intermediate updates. To ensure the convergence of SSNAL, the convergence of the semismooth newton Method and the augmented Lagrangian method must be established.
        
    \begin{definition}[Semismoothness]
        Let $F : \mathcal{O} \subseteq \mathcal{X} \to \mathcal{Y}$ be a locally Lipschitz function on an open set $\mathcal{O}$. $F$ is said to be semismooth at $x \in \mathcal{O}$ if $F$ is directionally differentiable at $x$ and for any $V \in \partial F(x + \Delta x)$ with $\Delta x \to 0$
        $$
        F(x + \Delta x) - F(x) - V\Delta x = O(||\Delta x||)
        $$
        $F$ is strongly semismooth at $x$ if $F$ is semismooth at $x$ and
        $$
        F(x + \Delta x) - F(x) - V\Delta x = O(||\Delta x||^2)
        $$
        $F$ is said to be a semismooth (respectively, strongly semismooth) function on $\mathcal{O}$ if it is semismooth (respectively, strongly semismooth) everywhere in $\mathcal{O}$.
    \end{definition}
    \begin{proposition}[Semismoothness of the proximal map]\label{prop:prox-semismooth}
        For any $t > 0$, and integer values of $p\in[0,\infty)$ the proximal mapping $\mathrm{prox}_{t s(\cdot)}$ is strongly semismooth.
    \end{proposition}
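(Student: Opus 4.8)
The plan is to reduce the claim to a structural fact about the proximal map of the separable function $s(u)=\sum_{\{i,j\}\in E} w_{ij}|u_{ij}|^p$ (with the usual modification at $p=\infty$): namely, that $\mathrm{prox}_{ts}$ decomposes coordinatewise (edgewise) into scalar proximal maps of $t' \mapsto w\,|t'|^p$, and then to invoke the known fact that piecewise-smooth/semialgebraic Lipschitz maps are strongly semismooth. First I would record that since $s$ is separable over edges, $\mathrm{prox}_{ts}(v)$ acts independently on each coordinate $v_{ij}$, so it suffices to prove strong semismoothness of the one-dimensional map $g_{p,c}(x) := \mathrm{prox}_{c|\cdot|^p}(x) = \argmin_y \{\tfrac12 (y-x)^2 + c|y|^p\}$ for each fixed $c>0$ and integer $p\in\{0,1,2,\dots\}$ (the case $p=\infty$ being handled by projection onto a box, which is classically strongly semismooth); strong semismoothness of the product map follows since it holds componentwise and the generalized Jacobian of a separable map is the product of the component subdifferentials.

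Next I would establish that $g_{p,c}$ is a globally Lipschitz, monotone, odd function that is \emph{piecewise real-analytic} with finitely many pieces. The optimality condition is $0 \in y - x + c\,\partial(|y|^p)$. For $p=1$ this gives the soft-threshold $g_{1,c}(x) = \mathrm{sign}(x)\max(|x|-c,0)$, which is piecewise linear, hence strongly semismooth. For integer $p\ge 2$, on $\{y>0\}$ the condition becomes $x = y + cp\,y^{p-1}$, defining a real-analytic strictly increasing bijection $y\mapsto x$ on $(0,\infty)$ whose inverse $g_{p,c}|_{(0,\infty)}$ is therefore real-analytic; by oddness the same holds on $(-\infty,0)$, with $g_{p,c}(0)=0$ and a continuous (indeed Lipschitz-1) match at the origin. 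Thus $g_{p,c}$ is Lipschitz and piecewise real-analytic with pieces $(-\infty,0]$ and $[0,\infty)$. I would then cite the standard result (e.g.\ Facchinei–Pang, or Ulbrich) that a locally Lipschitz function which is piecewise $C^1$ with real-analytic (more generally, semismooth of order $1$) selection functions is strongly semismooth; equivalently, real-analytic functions are strongly semismooth and strong semismoothness is preserved under the ``piecing together'' of finitely many such functions along a locally finite analytic stratification.

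The step I expect to be the main obstacle — and the one I would write out carefully — is verifying the strong-semismoothness estimate $g_{p,c}(x+\Delta x) - g_{p,c}(x) - V\Delta x = O(|\Delta x|^2)$ precisely \emph{at the breakpoint} $x=0$, since that is where the two analytic pieces meet and where $g_{p,c}$ is only $C^1$ (not real-analytic). Near $0$ one has $g_{p,c}(x) = x + o(x)$ for $p>1$ (since the $cp\,y^{p-1}$ term is higher order), so both one-sided pieces have derivative $1$ at $0$ and the generalized Jacobian there is the singleton $\{1\}$ when $p>1$; the remainder $g_{p,c}(x) - x$ behaves like a constant multiple of $|x|^{p-1}\,\mathrm{sign}(x) \cdot (\text{correction})$, which one must check is $O(x^2)$ — this holds for integer $p\ge 3$ directly, while $p=2$ is exactly linear, and $p=1$ is the already-handled piecewise-linear case, so the only genuine content is confirming the bound for $p\ge 3$ near the origin. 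For $p\ge 2$ away from $0$, the implicit-function computation shows $g_{p,c}$ is $C^\infty$ (indeed analytic) with bounded derivatives, giving the $O(\Delta x^2)$ bound by Taylor's theorem; assembling the edgewise estimates with a uniform constant over the finitely many edges completes the argument.
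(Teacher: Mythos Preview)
Your proposal is correct and follows essentially the same route as the paper: reduce to the scalar prox via separability, handle $p=1$ as the piecewise-linear soft-threshold, and handle integer $p\ge 2$ via the first-order optimality condition and the implicit function theorem. Your extra care at the breakpoint $x=0$ (noting that for odd $p\ge 3$ the scalar prox is only piecewise analytic there, and planning to verify the $O(|\Delta x|^2)$ estimate directly) is in fact more rigorous than the paper's terse ``$\mathcal{C}^1$, hence strongly semismooth,'' since $\mathcal{C}^1$ alone does not imply strong semismoothness without a Lipschitz-derivative or piecewise-smooth hypothesis.
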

    The result is proved in Appendix \ref{sec:proofs-sec-3}. Convergence of inexact ALM is ensured whenever the following criterion holds, for a summable sequence of nonnegative numbers $\{\epsilon_k\}$.
    \begin{equation}\label{eq:stopping-criterion}
        \text{dist}(0, \partial F(\phi_{k+1}, u_{k+1})) \leq \epsilon_{k}/\max\{1,\sqrt{\sigma_k}\}
    \end{equation}
    \begin{theorem}[Convergence of SSNAL]\label{thm:ssnal-convergence}
        Let ($\phi_k, u_k, z_k, y_k$) be the sequence generated by SSNAL with stopping criterion \eqref{eq:stopping-criterion}. Then, the sequence $\{\phi_k\}$ is bounded and converges to the unique optimal solution of \eqref{eq:bcut-primal}, and $||B^\top \phi - u||$ and $||\phi^\top (\mu - \nu) - 1||$ converge to zero. 
    \end{theorem}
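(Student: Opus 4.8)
The plan is to recognize \cref{alg:alg1} as an instance of the \emph{inexact proximal point algorithm} applied to the Lagrangian dual of \eqref{eq:bcut-primal}, and then to invoke the classical convergence theory of Rockafellar for the inexact augmented Lagrangian method, after (i) checking that the problem data satisfy its hypotheses and (ii) showing that the implementable stopping rule \eqref{eq:stopping-criterion} matches Rockafellar's summable-error criterion. Throughout I would work with $\phi$ restricted to the mean-zero subspace $\mathbf{1}_n^\perp$; this is a harmless gauge fixing, since $\phi \mapsto \phi + c\mathbf{1}_n$ changes neither $s(B^\top\phi)$ nor the constraint $\phi^\top(\mu-\nu)=1$ (because $\mathbf{1}_n^\top(\mu-\nu)=0$), and it is exactly what makes the level sets of the subproblem bounded, as used in \cref{subsec:ssncg}.

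First I would record the structural facts needed. Since $G$ is connected and $\mu\neq\nu$, the affine set $\{\phi:\phi^\top(\mu-\nu)=1\}$ is nonempty, and taking $u=B^\top\phi$ for any such $\phi$ gives a finite value of $s(u)$; hence \eqref{eq:bcut-primal} has finite optimal value. On $\mathbf{1}_n^\perp\times\mathbb{R}^{N}$ the objective is coercive — using that $BB^\top$ is positive definite on $\mathbf{1}_n^\perp$ for connected $G$ — so a primal minimizer exists, and for $p\in(1,\infty)$ strict convexity of $s$ together with positive definiteness on the reduced space gives uniqueness (the polyhedral cases $p\in\{1,\infty\}$ being handled by the same augmented Lagrangian machinery below, cf.\ \cref{rem:posdefhessian}). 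Because the only constraints are affine equalities and the primal value is finite, there is no duality gap, the dual \eqref{eq:bcut-dual} attains its optimum, and the (KKT) system is solvable; equivalently, the maximal monotone operator $T_{\mathcal D}$ whose zeros are the KKT dual pairs $(y,z)$ has a nonempty zero set. These are precisely the standing assumptions of Rockafellar's theory.

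Next I would make the proximal-point identification precise and prove the bridging lemma. With the exact inner solve in line~2 of \cref{alg:alg1}, the update $(z_{k+1},y_{k+1})$ of lines~3--4 is exactly $(I+\Sigma_k T_{\mathcal D})^{-1}(z_k,y_k)$ for the block scaling $\Sigma_k=\mathrm{diag}((\sigma_1)_k I,(\sigma_2)_k)$, because $\mathcal L_\sigma$ is the Moreau envelope of the ordinary dual functional. The inner solve is inexact, and the key point is that \eqref{eq:stopping-criterion} implies Rockafellar's criterion~(A): since $F=\mathcal L_{\sigma_k}(\cdot;z_k,y_k)$ is strongly convex on $\mathbf{1}_n^\perp\times\mathbb{R}^{N}$ with modulus bounded below in terms of $\sigma_k$ and the spectral gap of $BB^\top$ on $\mathbf{1}_n^\perp$, a subgradient of size $\epsilon_k/\max\{1,\sqrt{\sigma_k}\}$ at $(\phi_{k+1},u_{k+1})$ forces that point to lie within $O(\epsilon_k)$ of the exact minimizer $(\bar\phi,\bar u)$, and hence $(z_{k+1},y_{k+1})$ lies within $O(\epsilon_k)$ of the exact prox point; with $\sum_k\epsilon_k<\infty$ this is criterion~(A). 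Here \cref{prop:prox-semismooth} and \cref{rem:posdefhessian} guarantee that \cref{alg:alg2} actually reaches this accuracy superlinearly in finitely many inner steps, so the stopping rule is attainable, but this is not needed for the convergence statement itself. I expect this conversion — carefully tracking the strong-convexity modulus through the $\max\{1,\sqrt{\sigma_k}\}$ scaling, and quotienting out the translation invariance of $\phi$ so that ``the unique optimal solution'' is well defined — to be the main obstacle.

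Finally I would assemble the conclusion. By Rockafellar's theorem, under criterion~(A) with summable $\{\epsilon_k\}$ and $\sigma_k$ bounded away from $0$ (the algorithm keeps $\sigma_k$ nondecreasing with $\sigma_0>0$), the dual iterates $(z_k,y_k)$ converge to some $(z^\star,y^\star)$ in the dual solution set. Being convergent they are Cauchy, so reading lines~3--4 backwards gives $\|B^\top\phi_{k+1}-u_{k+1}\|=\|z_{k+1}-z_k\|/(\sigma_1)_k\to 0$ and $\|\phi_{k+1}^\top(\mu-\nu)-1\|=\|y_{k+1}-y_k\|/(\sigma_2)_k\to 0$, i.e.\ the primal feasibility residuals vanish. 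It remains to show $\{\phi_k\}$ is bounded and converges to the unique primal solution $\phi^\star$: by the same strong-convexity estimate the iterates lie in a bounded level set, so $\{(\phi_k,u_k)\}$ is bounded; each $(\phi_{k+1},u_{k+1})$ is an approximate minimizer of $\mathcal L_{\sigma_k}(\cdot;z_k,y_k)$, so as the multipliers and residuals stabilize every cluster point satisfies the (KKT) system with multipliers $(z^\star,y^\star)$, and uniqueness of the primal solution then upgrades ``every cluster point is optimal'' to genuine convergence $\phi_k\to\phi^\star$. This establishes all three assertions of the theorem.
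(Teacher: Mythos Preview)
Your proposal is correct and follows essentially the same route as the paper: both derive convergence from Rockafellar's inexact proximal-point\,/\,augmented Lagrangian theory (the paper does so indirectly by citing \cite{zhao2010newtoncgal}, Proposition~3.3 and Theorems~3.4--3.5, after noting semismoothness of the prox map, while you go back to Rockafellar directly and fill in the bridging lemma yourself). Your explicit restriction to $\mathbf{1}_n^\perp$ to secure strong convexity and uniqueness is in fact more careful than the paper's treatment, which asserts positive definiteness of the subproblem Hessian without addressing the $\mathbf{1}_n$ kernel.
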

    The above convergence theorem follows from~\cite{rockafellar1976a, rockafellar1976b}. Next, we state the convergence property for the semismooth Newton method used to solve the subproblems associated with \eqref{eq:newtoncgsystem}. The proof is provided in Appendix \ref{sec:proofs-sec-3}.
    
    \begin{theorem}[Convergence of SSNCG]
        Assume that $\text{prox}_{\lambda s}(\cdot)$ is strongly semismooth on $int(dom(s))$. Let the sequence $\{\phi_k\}$ be generated SSNCG.Then, $\{\phi_k\}$ converges to the unique solution $\bar{\phi}$ of \eqref{eq:newtoncgsystem} and 
        $$
        ||\phi_{k+1} - \bar{\phi}|| = O(||\phi_k - \bar{\phi})||^{1+\tau})
        $$
        where $\tau \in (0,1]$ is a given constant in the algorithm.
    \end{theorem}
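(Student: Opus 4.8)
The plan is to treat \eqref{eq:newtoncgsystem} as a nonsmooth equation $\nabla f(\phi)=0$ and run the standard two-stage analysis for semismooth Newton methods: first a globalization argument via the Armijo line search, then a local superlinear (Hölder) rate once the iterates enter a neighborhood of $\bar\phi$. Three structural facts carry the argument. (i) Since $f$ is strongly convex and continuously differentiable, with $\nabla f$ given by the affine-plus-prox expression displayed before \eqref{eq:newtoncgsystem}, the map $\nabla f$ is globally Lipschitz and strongly monotone; hence $\bar\phi$ exists and is unique (as already noted for the subproblem). (ii) By \cref{prop:prox-semismooth}, $\mathrm{prox}_{s/\sigma_1}$ is strongly semismooth, and $\bar u(\cdot)$ is this prox map precomposed with the linear map $B^\top$ and a shift, so $\bar u(\cdot)$ and therefore $\nabla f(\cdot)$ are strongly semismooth, with $\hat\partial^2 f(\phi)=\sigma_1 BB^\top-\sigma_1 B\,\partial\bar u(\phi)+\sigma_2(\mu-\nu)(\mu-\nu)^\top$ a valid Newton-type surrogate for the (classically nonexistent) Jacobian. (iii) By \cref{rem:posdefhessian}, on a connected graph every $H\in\hat\partial^2 f(\phi)$ is symmetric positive definite; together with boundedness of the sublevel set this gives a uniform sandwich $0\prec\alpha I\preceq H\preceq\beta I$ along the iterates, so the CG subsolver in line~2 of \cref{alg:alg2} is well defined and $\Delta\phi$ is a genuine descent direction for $f$.

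For global convergence I would argue that, because $\Delta\phi$ is descent and $f$ is strongly convex with compact sublevel sets, the Armijo rule yields a monotone decreasing $\{f(\phi_k)\}$ and a bounded $\{\phi_k\}$; the usual line-search bookkeeping then forces $\nabla f(\phi_k)\to 0$, and strong monotonicity of $\nabla f$ upgrades this to $\phi_k\to\bar\phi$, the unique zero. One must also control the inexactness of the CG step, e.g.\ requiring the residual $\|H\Delta\phi+\nabla f(\phi)\|\le\min\{\bar\kappa,\ \|\nabla f(\phi)\|^{1+\tau}\}$, which keeps $\Delta\phi$ uniformly gradient-related so the preceding reasoning applies verbatim.

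For the local rate, once $\phi_k$ is close to $\bar\phi$, strong semismoothness of $\nabla f$ gives, for the matrix $H_k\in\hat\partial^2 f(\phi_k)$ actually used, $\|\nabla f(\phi_k)-\nabla f(\bar\phi)-H_k(\phi_k-\bar\phi)\|=O(\|\phi_k-\bar\phi\|^{1+\tau})$. Combining this with $\nabla f(\bar\phi)=0$, the uniform bound $\|H_k^{-1}\|\le\alpha^{-1}$, the Lipschitz estimate $\|\nabla f(\phi_k)\|=O(\|\phi_k-\bar\phi\|)$, and the inexactness tolerance above yields $\|\phi_k+\Delta\phi_k-\bar\phi\|=O(\|\phi_k-\bar\phi\|^{1+\tau})$. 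It then remains to show the full step is eventually accepted in line~3: the classical argument that a superlinearly convergent direction satisfies the Armijo inequality $f(\phi_k+\Delta\phi_k)-f(\phi_k)\le\sigma\langle\nabla f(\phi_k),\Delta\phi_k\rangle$ for all large $k$ whenever $\sigma<\tfrac12$, so that $\zeta=1$ and $\phi_{k+1}=\phi_k+\Delta\phi_k$, which delivers the stated bound.

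The main obstacle is the interplay between the inexact CG solve and the semismooth Newton rate: one must pick a forcing term making the CG error $O(\|\phi_k-\bar\phi\|^{1+\tau})$ without access to $\bar\phi$ (hence the surrogate $\|\nabla f(\phi)\|^{1+\tau}$ via Lipschitz continuity of $\nabla f$), and simultaneously check this choice still admits unit steps in the Armijo test. A secondary technical point is verifying that the explicit descriptions of $\partial\bar u(\phi)$ in the two cases ($p\in[1,\infty)$, diagonal $D_{\bar u}$; $p=\infty$, the convex set of clamped sign vectors) are bona fide elements of the generalized Jacobian of the prox map composed with $B^\top$, so that $\hat\partial^2 f$ legitimately enters the semismooth Newton framework; this is precisely where \cref{prop:prox-semismooth} and a Clarke-Jacobian chain rule are used.
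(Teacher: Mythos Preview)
Your proposal is correct and traces exactly the standard semismooth Newton--CG convergence analysis that underlies the result. The paper's own proof is far terser: it records that $\mathrm{prox}_{\lambda s}$ is strongly semismooth by \cref{prop:prox-semismooth}, then cites \cite{zhao2010newtoncgal} (Proposition~3.3 for the descent property of $\Delta\phi$, Theorems~3.4 and~3.5 for global convergence and the local $(1+\tau)$-order rate) without further detail, so your sketch is essentially a reconstruction of the content of those cited theorems and the two approaches coincide at the mathematical level.
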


\section{Experimental results}\label{sec:experimental-results}

    \begin{figure}[h!]
        \centering \includegraphics[width=0.5\linewidth]{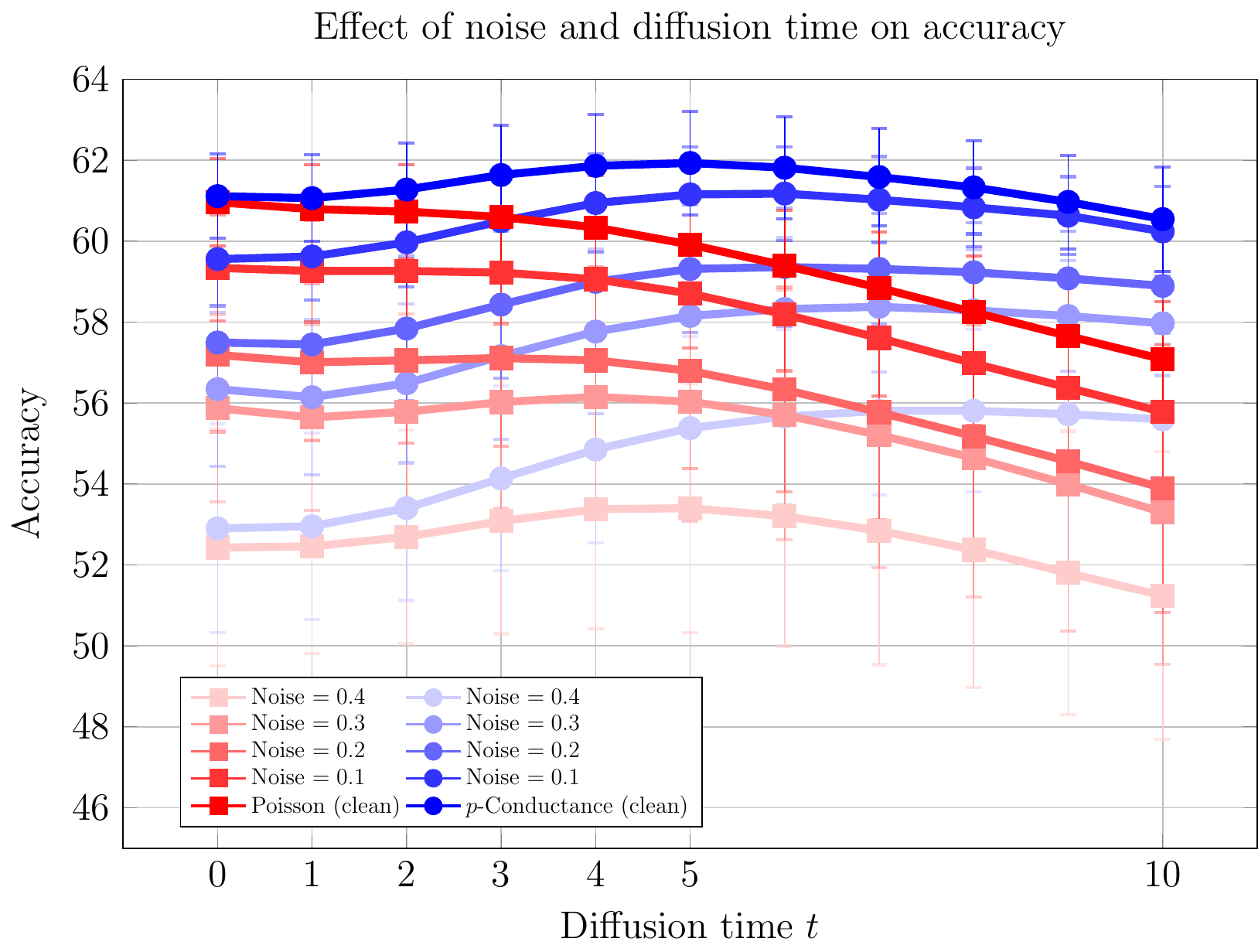}
        \caption{Accuracy on CIFAR-10 for various  diffusion times and robustness. Shades denote noise level. Label rate is 10 labels per class.}% per-label.}
        \label{fig:corrupted-labels-1}
    \end{figure}

    \begin{table*}[t!]
    \caption{Average accuracy over 100 trials with standard deviation in brackets. Best is bolded.}\label{tab:cora-pubmed}\vspace*{-.5cm}
    \begin{center}
    \begin{small}
    \begin{sc}
    \begin{tabular}{llllll}
    \toprule
    Cora \# Labels per class&\textbf{1}&\textbf{3}&\textbf{5}&\textbf{10}&\textbf{100}\\
    \midrule
    Laplace/LP \cite{zhu2003semi}&   21.8 (14.3)    & 37.6 (12.3)     &  51.3 (11.9)      & 66.9 (6.8)   &  81.8 (1.1)       \\
    Sparse LP \cite{jungsparselp2016} & 16.0 (1.8) & 19.4 (1.8) & 23.1 (2.3) & 28.7 (2.2) & 47.0 (2.2) \\
    p-laplace \cite{flores2022}&   41.9 (8.9)    & 57.6 (7.1)     &  61.9 (6.2)      & 67.9 (3.6)   &  79.2 (1.3)       \\
    p-eikonal \cite{calder2022hamilton}&   40.4 (8.6)    &  51.8 (7.2)    &   56.9 (6.8)     &  64.5 (4.2)  & 80.2 (1.1) \\       
    Poisson \cite{calder20poisson}      &   57.4 (9.2)   &  67.0 (4.9)      &    69.3 (3.6)     & 71.6 (3.0)   &  76.0 (1.0)    \\
    $p$-conductance ($p=1$, $\epsilon=n$)       & 22.9 (6.8)   &  22.7 (6.8)  &  21.4 (6.6) & 21.3 (6.6)    & 35.6 (13.2)  \\
    $p$-conductance ($p=2$, $\epsilon=n$)       &  58.9 (7.2) & 67.9 (3.7)   & 70.2 (2.3)  & 72.2 (1.9)   & 75.2 (1.3)  \\
    $p$-conductance ($p=\infty$, $\epsilon=n$)       & 44.3 (6.8)   &  53.7 (5.0)  &  58.9 (3.6) & 63.7 (3.1)    & 73.7 (1.4)  \\
[0.5ex]
    \cdashline{1-6}\noalign{\vskip 0.5ex}
    PoissonMBO \cite{calder20poisson}      &   58.5 (9.4)    & 68.5 (4.1)       &   70.7 (3.0)     &  73.3 (2.3)  &  80.1 (0.9)  \\
    \textbf{$p$-conductance ($p=2$, $\epsilon=0$)}       &  \textbf{63.1 (8.0)} & \textbf{72.9 (3.5)}   & \textbf{75.5 (1.8)}  & \textbf{77.9 (1.1)}   & \textbf{82.9 (0.9)}  \\
    \bottomrule
    \toprule
    Pubmed \# Labels per class&\textbf{1}&\textbf{3}&\textbf{5}&\textbf{10}&\textbf{100}\\
    \midrule
    Laplace/LP \cite{zhu2003semi}&   34.6 (8.8)    &  35.7 (8.2)    &  36.9 (8.1)      &  39.6 (9.1)  &  74.9 (3.6)       \\
    Sparse LP \cite{jungsparselp2016} & 32.4 (4.7) & 33.0 (4.8) & 33.6 (4.8) & 33.9 (4.8) & 43.2 (4.1) \\
    p-laplace \cite{flores2022}&   44.8 (11.2)    & 58.3 (9.1)     &  61.6 (7.7)      & 66.2 (4.7)   &  74.3 (1.1)       \\
    p-eikonal \cite{calder2022hamilton}& 44.3 (11.8)      &  55.6 (10.0)    &   58.4 (9.1)     &  65.1 (5.8)  & 74.9 (1.5) \\       
    Poisson \cite{calder20poisson}      &  55.1 (11.3)    &  66.6 (7.4)      &   68.8 (5.6)     &  71.3 (2.2)  &  75.7 (0.8)    \\
    $p$-conductance ($p=1$, $\epsilon=n$)       & 39.6 (0.3)  &  39.6 (0.3) & 39.6 (0.3)  &  40.3 (0.3)  & 41.2 (0.3)  \\
    $p$-conductance ($p=2$, $\epsilon=n$)       &  58.0 (12.1)  &  67.5 (7.5)  &  70.8 (4.9) & \textbf{72.4 (2.5)}    & 77.6 (0.6)  \\
    $p$-conductance ($p=\infty$, $\epsilon=n$)       & 48.0 (8.1)  & 56.5 (5.2)  & 57.0 (8.2)  &  62.6 (3.0)  & 72.9 (1.3)  \\[0.5ex]
    \cdashline{1-6}\noalign{\vskip 0.5ex}
    PoissonMBO \cite{calder20poisson}      &    54.9 (11.4)   &    65.3 (7.8)     &    68.2 (5.3)    &  69.9 (3.0)  &74.8 (1.0)   \\
    \textbf{$p$-conductance ($p=2$, $\epsilon=0$)}       &  \textbf{58.7 (11.5)}  &  \textbf{67.5 (7.5)}  &  \textbf{70.8 (4.8)} & \textbf{72.4 (2.6)}    & \textbf{77.8 (0.6)}  \\
    \bottomrule
    \end{tabular}
    \end{sc}
    \end{small}
    \end{center}\vspace{-.25cm}
    \end{table*}
    
    We represent the initial label configuration by a matrix 
    $Y\in\mathbb{R}^{n\times k}$. If node $i$ is labeled by label $\ell$, we view $Y_{i,\ell}$ as placing a Dirac measure at node~$i$ with mass concentrated on label~$\ell$. We diffuse these measures according to the heat equation on the graph via the heat kernel $e^{-tL}$ with time parameter $t>0$. The diffused measures are
        \begin{equation}
            Y_{\text{diffused}} = e^{-tL}Y,
        \end{equation}
    since $e^{-tL}$ is a mass-preserving operator (specifically, if $\mu\in\mathbb{R}^n$ is a probability measure, so is $e^{-tL}\mu$), and thus yields a ``smoothed'' measure $Y_{\text{diffused}}$. We then use $Y_{\text{diffused}}$ to construct the matrix $R$ (as described in Remark \eqref{rmk:one-vs-all}) and solve \crefrange{eq:kcut}{eq:kcut-2}. Diffusing labeled measures improves accuracy in both the clean and noisy label settings.

    \begin{remark}\label{rmk:class-pred}
        Note that the solutions to \crefrange{eq:kcut}{eq:kcut-2} typically take on continuous values which then need to be mapped to class predictions. In \cref{sec:cut-assignment}, we propose a heuristic that is capable of exploiting estimates of the class cardinalities. In the limiting case when no estimates of class cardinalities are provided, predictions reduce to the heuristic of selecting the class (column) with the largest magnitude:
             \begin{equation}\label{eq:labeldec}
                 y^\text{pred}_i = \argmax_{j\in \{1,\dots,k\}} \{\phi^\ast_{ij}\}.
             \end{equation}
        The deviation of the estimate from the ground-truth cardinalities is governed by the parameter $\epsilon$. Thus, we denote by 
        \textbf{$p$-conductance} ($\epsilon=0$) to be our method without any cardinality prior and \textbf{$p$-conductance} ($\epsilon=n$) to be our method with exact knowledge of class cardinalities.
    \end{remark}

    We compare to the state-of-the-art unconstrained graph-based SSL algorithm Poisson Learning \cite{calder20poisson} as well as variants based on the MBO procedure \cite{jacobs2018auction, calder20poisson} to incorporate knowledge of class sizes. We also include classifiers derived from the $p$-Laplacian~\cite{flores2018algorithms}, the $p$-eikonal equations~\cite{calder2022hamilton}, and TV-distance~\cite{jungsparselp2016}.

    \noindent\textbf{Citation networks} In \cref{tab:cora-pubmed} and \cref{tab:citeseer}, we evaluate our method on the Planetoid dataset~\cite{planetoid} corresponding to citation networks, where the graph is given and not data-driven. In these networks, the vertices represent publications and their links refer to citations between publications, while the label corresponds to the subject of the document. Note that we evaluate on the largest connected component of each dataset. We omit results on VolumeMBO due to runtime errors when evaluating the method on this dataset. For Cora, we outperform PoissonMBO by $4.6\%$ at 1 label per class, and trend persists across datasets and label rates. We also include in \cref{fig:illustration-cuts-cora} (see \cref{sec:experiments}) a visualization of the solution $\phi^\ast$ as in \cref{cor:er} in the setting of the one-vs-all measure minimum cut for $p=2$.
        
    \noindent\textbf{Image datasets}  We evaluate our method on three image datasets: MNIST \cite{mnist}, Fashion-MNIST \cite{fmnist} and CIFAR-10 \cite{cifar}, using pretrained autoencoders as feature extractors as in \cite{calder20poisson}. We include more details in \cref{sec:experiments}, and include results in \cref{tab:mnist}, \cref{tab:fmnist}, and \cref{tab:cifar}. $p$-conductance learning outperforms all other methods. Notably, with cardinality estimates on CIFAR-10, our method outperforms PoissonMBO by 2.4\%.
    
    \noindent\textbf{Corrupted labels} We also present results demonstrating accuracy under random label corruptions. In this setting, the labels associated with a fixed proportion of the training set are randomly flipped to an incorrect class. We show that our method exhibits superior robustness, with a 3.8\% improvement in accuracy compared to Poisson learning on CIFAR-10 when 40\% of the labels have been flipped. In Figure \ref{fig:corrupted-labels-1}, we show that this robustness is dependent on the diffusion of the labels. We observe that Poisson learning is not able to exploit the diffused labels as effectively. Note that methods which enforce interpolation of the labels (e.g. Laplace learning) may not be applied to the diffused labels due to the propagation of label information to \emph{every} node. 
    
    \noindent\textbf{Partially labeled learning (PLL)} PLL is a weakly supervised learning problem where each training instance is equipped with a \emph{set} of candidate labels, including the ground-truth label. In the PLL setting, the labels associated with a fixed proportion of the training set are augmented with a set of chosen candidate classes. In this experiment, we consider the superclasses associated with the CIFAR-100~\cite{cifar} dataset in addition to 100 fine-grained class labels. We consider a random \emph{subset} of fine-grained labels associated with each training image as being given. The task is to learn the fine-grained classes for all test images. We demonstrate superior performance, with $p$-conductance-MBO outperforming PoissonMBO by 4.1\% with a partial label set size of 4. Interestingly, $p$-conductance learning with $p=\infty$ performs best with partial label set size of 5. our results are shown in \cref{tab:pll}.

\section{Conclusion}

    This work presented $p$-conductance learning, a robust semi-supervised learning method. By framing the problem in terms of measure minimum cuts, we obtain connections to effective resistance and optimal transport, in addition to results which support the method’s strong theoretical and empirical performance. Extensive experiments validate its capacity to handle scarce or corrupted labels while achieving state-of-the-art results across multiple datasets. Possible future directions include:
        \begin{itemize}
            \item The development of active label selection strategies in tandem with the $p$-conductance learning framework;
            \item Theoretical investigations into the statistical consistency of our method;
            \item Exploration of SSL frameworks which are robust to data corruption in addition to label corruption.
        \end{itemize}

\section*{Acknowledgements}

SR was supported by the Halicio\u{g}lu Data Science Institute graduate prize fellowship. GM was supported by NSF CCF-2217058. GM and CH were supported by an award from the W.M. Keck Foundation.

\appendix

\section{Additional experiments and figures}\label{sec:experiments}

\subsection{Image datasets}

    In this section, we provide experimental results on MNIST, Fashion-MNIST, and CIFAR-10. We demonstrate our method on a range of semi-supervised learning problems, using the \texttt{graphlearning} package~\cite{graphlearning}.

    For MNIST and Fashion-MNIST, we used variational autoencoders with 3 fully connected layers of sizes (784,400,20) and (784,400,30), respectively, followed by a symmetrically defined decoder. The autoencoder was trained for 100 epochs on each dataset. The autoencoder architecture, loss, and training are similar to \cite{kingma2013bayes}. 

    \begin{table*}[htb!]
        \caption{MNIST. Average accuracy over 100 trials with standard deviation in brackets.}\vspace*{-.25cm}
        \label{tab:mnist}
        \begin{center}
            \begin{small}
                \begin{sc}
                    \begin{tabular}{llllll}
                        \toprule
                        MNIST \# Labels per class & \textbf{1} & \textbf{3} & \textbf{5} & \textbf{10} & \textbf{4000} \\
                        \midrule
                        Label propagation \cite{zhu2003semi} & 16.1 (6.2) &  42.0 (12.4) & 69.5 (12.2) & 94.8 (2.1) & 98.3 (0.0) \\
                        Sparse LP \cite{jungsparselp2016}       &  14.0 (5.5) &  14.5 (4.0) & 16.2 (4.2) & 18.4 (4.1) & 94.1 (0.1) \\
                        p-laplace \cite{flores2022}             & 83.2 (2.4) & 87.4 (1.5) & 90.7 (0.9) & 94.8 (0.1) & \textbf{98.1 (0.1)} \\
                        $p$-eikonal \cite{calder2022hamilton}      &   81.4 (4.5)    &   91.2 (1.3)    &   92.5 (1.1)    & 94.3 (0.6)  & \textbf{98.1 (0.1)}    \\
                        Poisson \cite{calder20poisson}           & 95.8 (0.8) &96.3 (0.3)  & 96.7 (0.1) & 97.0 (0.1) & 97.2 (0.1) \\
                        $p$-conductance ($p=1$, $\epsilon=n$)        & 24.3 (1.6)  & 31.8 (1.4)  & 43.1 (1.1)  & 53.9 (0.8)  & 96.3 (0.1) \\
                        $p$-conductance ($p=2$, $\epsilon=n$)        & 95.4 (1.3) & 96.4 (0.5) & 96.7 (0.3) & 97.2 (0.1) & 97.3 (0.1) \\
                        $p$-conductance ($p=\infty$, $\epsilon=n$)    & 86.3 (0.8)  & 89.2 (0.6) & 92.7 (0.4) &  95.3 (0.1) & 97.2 (0.1) \\[0.5ex]
                        \cdashline{1-6}\noalign{\vskip 0.5ex}
                        VolumeMBO \cite{jacobs2018auction}      &    89.9 (7.3)    &   96.2 (1.2)     &    96.7 (0.6)    &  96.7 (0.2)   & 96.9 (0.1)     \\
                        PoissonMBO \cite{calder20poisson}        & \textbf{97.5 (0.1)} & \textbf{97.5 (0.1)} & \textbf{97.5 (0.1)} & \textbf{97.6 (0.1)} & \textbf{98.1 (0.1)} \\
                        \textbf{$p$-conductance ($p=2$, $\epsilon=0$)}             & 96.6 (0.4) & 96.9 (0.2) & 97.1 (0.1) & 97.4 (0.1) & 97.5 (0.1) \\
                        \textbf{$p$-conductance-MBO}             & \textbf{97.5 (0.1)} & \textbf{97.5 (0.1)} & \textbf{97.5 (0.1)} & \textbf{97.6 (0.1)} & \textbf{98.1 (0.1)}  \\
                        \bottomrule
                    \end{tabular}
                \end{sc}
            \end{small}
        \end{center}
    \end{table*}
    \begin{table*}[htb!]
        \caption{FashionMNIST. Average accuracy over 100 trials with standard deviation in brackets.}\vspace*{-.25cm}
        \label{tab:fmnist}
        \begin{center}
            \begin{small}
                \begin{sc}
                    \begin{tabular}{llllll}
                        \toprule
                        FashionMNIST \# Labels per class & \textbf{1} & \textbf{3} & \textbf{5} & \textbf{10} & \textbf{4000} \\
                        \midrule
                        Label propagation \cite{zhu2003semi} &  18.4 (7.3) &  44.0 (8.6) &  57.9 (6.7) & 70.6 (3.1) &  85.8 (0.1) \\
                        Sparse LP \cite{jungsparselp2016}       &  14.1 (3.8) & 13.7 (3.3) &   16.1 (2.5) &  15.2 (2.5)  & 79.3 (0.1) \\
                        p-laplace \cite{flores2022}             & 60.9 (3.0) & 65.2 (2.5) & 69.4 (1.5) & 78.0 (0.4) & 85.5 (0.1) \\
                        $p$-eikonal \cite{calder2022hamilton}      &   55.1 (4.4)    &   61.3 (3.0)    &   65.5 (3.4)    & 70.1 (1.2)  & 85.8 (0.1)    \\
                        Poisson \cite{calder20poisson}           & 68.6 (2.6) & 71.7 (2.3) & 74.3 (1.5) & 79.6 (0.6) & 81.1 (0.1) \\
                        $p$-conductance ($p=1$, $\epsilon=n$)        & 18.2 (6.3) & 21.3 (5.1) & 29.3 (3.9)  & 46.4 (2.4)  & 80.4 (0.3)  \\
                        $p$-conductance ($p=2$, $\epsilon=n$)        & 66.5 (2.8) & 69.9 (2.5) & 73.3 (1.7) & 80.2 (0.6) & 81.4 (0.2) \\
                        $p$-conductance ($p=\infty$, $\epsilon=n$)    &  54.2 (2.6) & 58.3 (1.9) & 64.1 (2.1)  & 72.8 (0.9) & 80.8 (0.1)  \\[0.5ex]
                        \cdashline{1-6}\noalign{\vskip 0.5ex}
                        VolumeMBO \cite{jacobs2018auction}      &    54.7 (5.2)    &    66.1 (3.3)    &  70.1 (2.8)     &   74.4 (1.5)   &  75.1 (0.2)    \\
                        PoissonMBO \cite{calder20poisson}        & 69.7 (2.8) & 73.0 (2.3) & 75.7 (1.6) & 80.9 (0.5) & 86.3 (0.1) \\
                        \textbf{$p$-conductance ($p=2$, $\epsilon=0$)}             & 70.2 (2.5)  & \textbf{73.4 (1.9)} & \textbf{76.1 (1.4)} & 81.0 (0.4) & \textbf{87.3 (0.1)} \\
                        \textbf{$p$-conductance-MBO}             &  \textbf{71.6 (0.7)} & 72.5 (1.5) & \textbf{76.1 (0.8)} & \textbf{82.4 (0.5)} & 86.5 (0.1) \\
                        \bottomrule
                    \end{tabular}
                \end{sc}
            \end{small}
        \end{center}
    \end{table*}
    \begin{table*}[htb!]
        \caption{CIFAR-10. Average accuracy over 100 trials with standard deviation in brackets.}\vspace*{-.25cm}
        \label{tab:cifar}
        \begin{center}
            \begin{small}
                \begin{sc}
                    \begin{tabular}{llllll}
                        \toprule
                        CIFAR-10 \# Labels per class&\textbf{1}&\textbf{3}&\textbf{5}&\textbf{10}&\textbf{4000}\\
                        \midrule
                        Label propagation \cite{zhu2003semi} & 10.4 (1.3)      &11.6 (2.7)     & 14.1 (5.0)      & 21.8 (7.4) & 80.9 (0.0)       \\
                       Sparse LP \cite{jungsparselp2016} &  11.8 (2.4)  &  11.1 (3.3)  & 14.4 (3.5)  &  15.6 (3.1) & 75.5 (0.3) \\
                        p-laplace \cite{flores2022}&   33.4 (4.7)     &   42.1 (3.1)     &   47.2 (2.7)    &  52.2 (1.6)  &  80.8 (0.2)      \\
                        $p$-eikonal \cite{calder2022hamilton}      &   32.9 (5.1)    &   42.8 (5.4)    &   49.7 (3.5)    & 52.8 (3.0)  & 80.5 (0.1)    \\ 
                        Poisson \cite{calder20poisson}      &   39.3 (5.4)    &   48.9 (3.5)    &   53.3 (2.8)    & 57.8 (1.8)  & 80.3 (0.2)    \\
                        $p$-conductance ($p=1$, $\epsilon=n$)       & 13.4 (6.1)  &  17.1 (3.7)  & 21.8 (2.9)  & 39.6 (2.1)    & 78.4 (0.3)  \\
                        $p$-conductance ($p=2$, $\epsilon=n$)       &  40.0 (5.1) & 50.4 (3.4)   & 54.9 (2.6)  & 59.4 (1.6)   & 80.6 (0.2)  \\
                        $p$-conductance ($p=\infty$, $\epsilon=n$)       &  30.1 (4.7)  & 39.7 (3.1)   & 43.6 (2.2)  &  45.9 (1.4)   & 79.1 (0.2) \\ [0.5ex]
                        \cdashline{1-6}\noalign{\vskip 0.5ex}
                        VolumeMBO \cite{jacobs2018auction}       &  38.0 (7.2)  & 50.1 (5.7)   & 55.3 (3.8)   & 59.2 (3.2)   &  75.8 (0.9)  \\
                        PoissonMBO \cite{calder20poisson}       & 40.8 (6.8)  & 53.1 (4.1)   &  58.1 (3.2) & 62.0 (1.9)   & 80.5 (0.2)  \\
                        \textbf{$p$-conductance ($p=2$, $\epsilon=0$)}       & 40.3 (5.6)   &  52.1 (3.6)  & 57.1 (2.3)  & 61.5 (1.5)  & 80.7 (0.2) \\
                        \textbf{$p$-conductance-MBO}       & \textbf{43.2 (6.5)}   & \textbf{54.0 (3.2)}  & \textbf{59.2 (2.4)}  & \textbf{62.6 (1.0)}  & \textbf{80.8 (0.2)}  \\
                        \bottomrule
                    \end{tabular}
                \end{sc}
            \end{small}
        \end{center}
    \end{table*}

    For each dataset, we construct a graph over the latent feature space. We used all available data to construct the graph, with $n=70,000$ nodes for MNIST and Fashion-MNIST, and $n=60,000$ nodes for CIFAR-10. The graph was constructed as a $k$-nearest neighbor graph with Gaussian edge weights
        \begin{align*}
            w_{ij} =\exp\left( -4||x_i-x_j||^2/d_k(x_i)^2 \right),
        \end{align*}
    where $x_i$ are the latent variables for image $i$, and $d_k(x_i)$ is the distance in the latent space between $x_i$ and its $k^{\rm th}$ nearest neighbor. We used $k=10$ in all experiments and symmetrize $W$. Results are shown in \cref{tab:mnist}, \cref{tab:fmnist}, \cref{tab:cifar}. Notably, with cardinality estimates on CIFAR-10, our method outperforms PoissonMBO by 2.4\%. On FashionMNIST at 1 label rate, $p$-conductance-MBO outperforms PoissonMBO by 1.9 \%

    \begin{table*}[htb!]
        \caption{Average accuracy over 100 trials with standard deviation in brackets. Best is bolded.}\vspace*{-.25cm}
        \label{tab:citeseer}
        \begin{center}
        \begin{small}
        \begin{sc}
        \begin{tabular}{llllll}
        \toprule
        Citeseer \# Labels per class&\textbf{1}&\textbf{3}&\textbf{5}&\textbf{10}&\textbf{100}\\
        \midrule
        Laplace/LP \cite{zhu2003semi}&   27.9 (10.4)    &   47.6 (8.1)   &   56.0 (5.9)     & 63.7 (3.5)  &  71.6 (1.2)       \\
        Sparse LP \cite{jungsparselp2016} & 19.3 (2.4) & 21.9 (2.3) & 24.5 (2.3) & 28.9 (2.0) & 45.6 (2.2) \\
        p-laplace \cite{flores2022}&   39.7 (10.6)    & 51.4 (6.7)     &  56.1 (5.8)      & 60.9 (3.4)   &  69.5 (1.1)       \\
        p-eikonal \cite{calder2022hamilton}&    39.5 (10.6)   &   49.7 (7.4)   &   54.9 (6.3)     & 60.0 (3.5)   & 71.0 (1.0) \\       
        Poisson \cite{calder20poisson}      &   46.7 (12.1)   &  57.8 (7.4)      &     62.6 (4.4)    &  66.7 (2.5) & \textbf{74.1 (0.8)}     \\
        $p$-conductance ($p=1$, $\epsilon=n$)       & 24.8 (2.9)   &  24.7 (2.6)  & 25.9 (2.9)  &  28.7 (3.4)  & 40.3 (9.3)  \\
        $p$-conductance ($p=2$, $\epsilon=n$)       & 49.9 (11.4)  &  60.9 (4.5)  & 64.0 (3.0)  &  \textbf{67.2 (1.8)}  & 72.5 (0.8)  \\
        $p$-conductance ($p=\infty$, $\epsilon=n$)       & 37.9 (10.2)  & 47.6 (7.0)  & 53.4 (4.8)  &  58.5 (3.4)  & 69.6 (1.0)  \\[0.5ex]
        \cdashline{1-6}\noalign{\vskip 0.5ex}
        PoissonMBO \cite{calder20poisson}      &  46.2 (10.6)     &   55.8 (5.4)    &    60.0 (3.0)   & 63.5 (2.5)   & 73.2 (1.0)   \\
        \textbf{$p$-conductance ($p=2$, $\epsilon=0$)}       & \textbf{50.7 (11.2)}  &  \textbf{61.0 (4.2)}  & \textbf{64.0 (2.2)}  &  66.7 (1.7)  & \textbf{74.1 (0.8)}  \\
        \bottomrule
        \end{tabular}
        \end{sc}
        \end{small}
        \end{center}
    \end{table*}

\subsection{Partially labeled learning}

    In the partially labeled learning (PLL) setting, the labels associated with a fixed proportion of the training set are augmented with a set of chosen candidate classes. In this experiment on the CIFAR-100~\cite{cifar} dataset, the candidate classes are selected from among the 5 fine-grained class labels that belong to the superclass of the true class label. Thus in the PLL task, for each labeled image in the training data, we consider a \emph{subset} of fine-grained labels associated with each training image as being given. The task is to learn the fine-grained class assignments for all test images, leveraging the labeled images with candidate sub-classes. In Table \ref{tab:pll} we demonstrate the decrease in accuracy as we increase the size of the candidate class set.
    $p$-conductance-MBO out-performs Poisson-MBO by 2.4\%-4\%.
    
    \begin{table*}[htb!]
        \caption{CIFAR-100 in the PLL setting. Average accuracy over 100 trials with standard deviation in brackets. Label rate is 10 labels per class. }\vspace*{-.25cm}
        \label{tab:pll}
        \begin{center}
            \begin{small}
                \begin{sc}
                    \begin{tabular}{llllll}
                        \toprule
                        CIFAR-100 \# labels &\textbf{1}&\textbf{2}&\textbf{3}&\textbf{4}&\textbf{5}\\
                        \midrule
                        Label propagation (Zhu et al., '03)   &   39.3 (0.5) & 33.4 (0.8) & 26.7 (1.0) & 18.6 (1.4) & 11.4 (0.3)      \\
                        Poisson (Calder et al., '20)    &   40.5 (0.5) & 34.7 (0.6) & 28.0 (1.1) & 19.3 (1.3) & 11.3 (0.2)    \\ 
                        $p$-conductance ($p=1$, $\epsilon=n$)       &  21.4 (1.1)  &  20.1 (1.3) &   19.4 (1.1)  & 14.6 (1.9) & 11.4 (0.3) \\
                        $p$-conductance ($p=2$, $\epsilon=n$)       &  41.7 (0.4) & 35.0 (0.7) & 29.4 (1.1) & 22.2 (1.7) & 11.8 (0.1)  \\
                        $p$-conductance ($p=\infty$, $\epsilon=n$)    &  30.4 (0.8)  & 27.1 (0.7)  &  26.4 (0.9)  & 23.1 (0.9) & \textbf{14.3 (0.1)} \\  
                        [0.5ex]
                        \cdashline{1-6}\noalign{\vskip 0.5ex}
                        PoissonMBO       &  42.8 (0.9)  & 37.8 (0.7)  &  31.4 (0.7)   & 22.3 (1.3) &  11.4 (0.1) \\
                        $p$-conductance ($p=2$, $\epsilon=n$)       &  42.1 (0.5) & 38.2 (0.5) & 32.5 (0.9) & 24.1 (1.8) & 12.2 (0.3)  \\
                        $p$-conductance-MBO       &  \textbf{45.2 (0.4)}      &  \textbf{40.5 (0.4)} &  \textbf{34.4 (0.6)}   & \textbf{26.4 (1.1)} & 14.1 (0.1) \\
                        \bottomrule
                    \end{tabular}
                \end{sc}
            \end{small}
        \end{center}
    \end{table*}

\subsection{Convergence of Newton-ALM}

    In figure \ref{fig:convergence}, we explore the empirical convergence of Newton-ALM. Since Newton-ALM exploits second-order information of the problem. We demonstrate rapid convergence compared to first-order methods. Our comparison is conducted with respect to the ADMM-based method derived in  section~\ref{sec:admm}. Note that the per-iteration computational cost is comparable. The dominant step in the ADMM algorithm is the least squares update to the primal variables, necessitating the inversion of the graph Laplacian of the data graph (see remark \ref{rem:admm}). Likewise, the dominant cost of the ADMM step is also the inversion of a rank-1 perturbed SPSD Hessian with the same sparsity structure as the underlying graph. 

    We evaluate convergence with respect to the decay of the relative KKT residual with respect to iteration for $p=5$ on a $20\times 20$ lattice dataset, where the relative residual is $\eta_1 + \eta_2$ and convergence is characterized by the following criterion:
    \begin{equation}
        \begin{aligned}
            &\max\{\eta_1, \eta_2\}\leq \epsilon,\\
            & \eta_1 = \frac{||B^\top \phi - u|| + ||\phi^\top(\mu - \nu)-1||}{1 + ||u|| + ||\mu - \nu||},\:\: \eta_2 = \frac{||B^\top z + y(\mu - \nu)|| + ||u - \text{prox}_{s/\sigma_1}(u + \frac{1}{\sigma_1}z)||}{1 + ||\mu - \nu|| + ||u||}
        \end{aligned}
    \end{equation}
    and $\epsilon > 0$ is a given tolerance. In our experiments, we set $\epsilon = 10^{-4}$.

    \begin{figure}[!htb]
        \centering
        \includegraphics[width=0.4\linewidth]{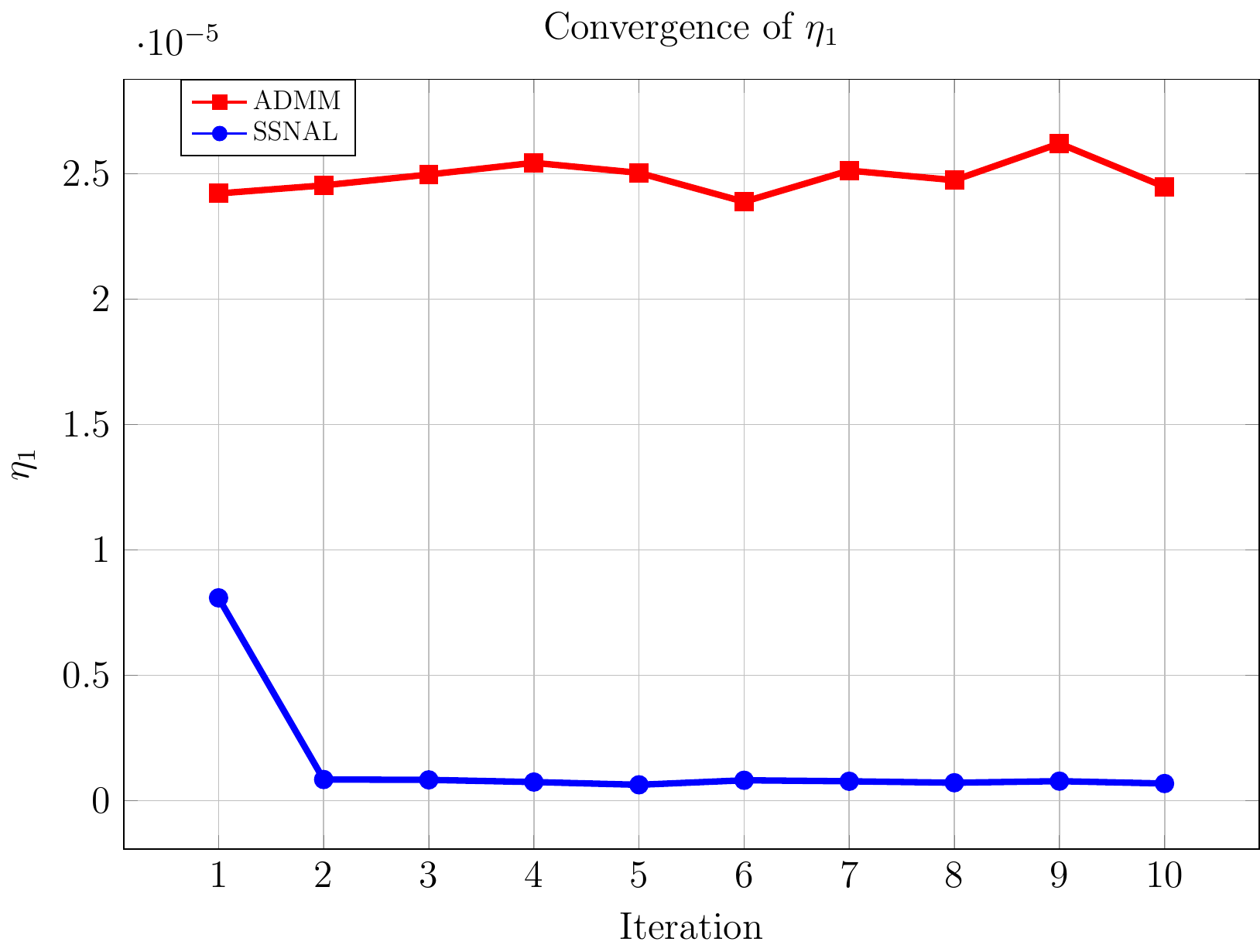}
        \includegraphics[width=0.4\linewidth]{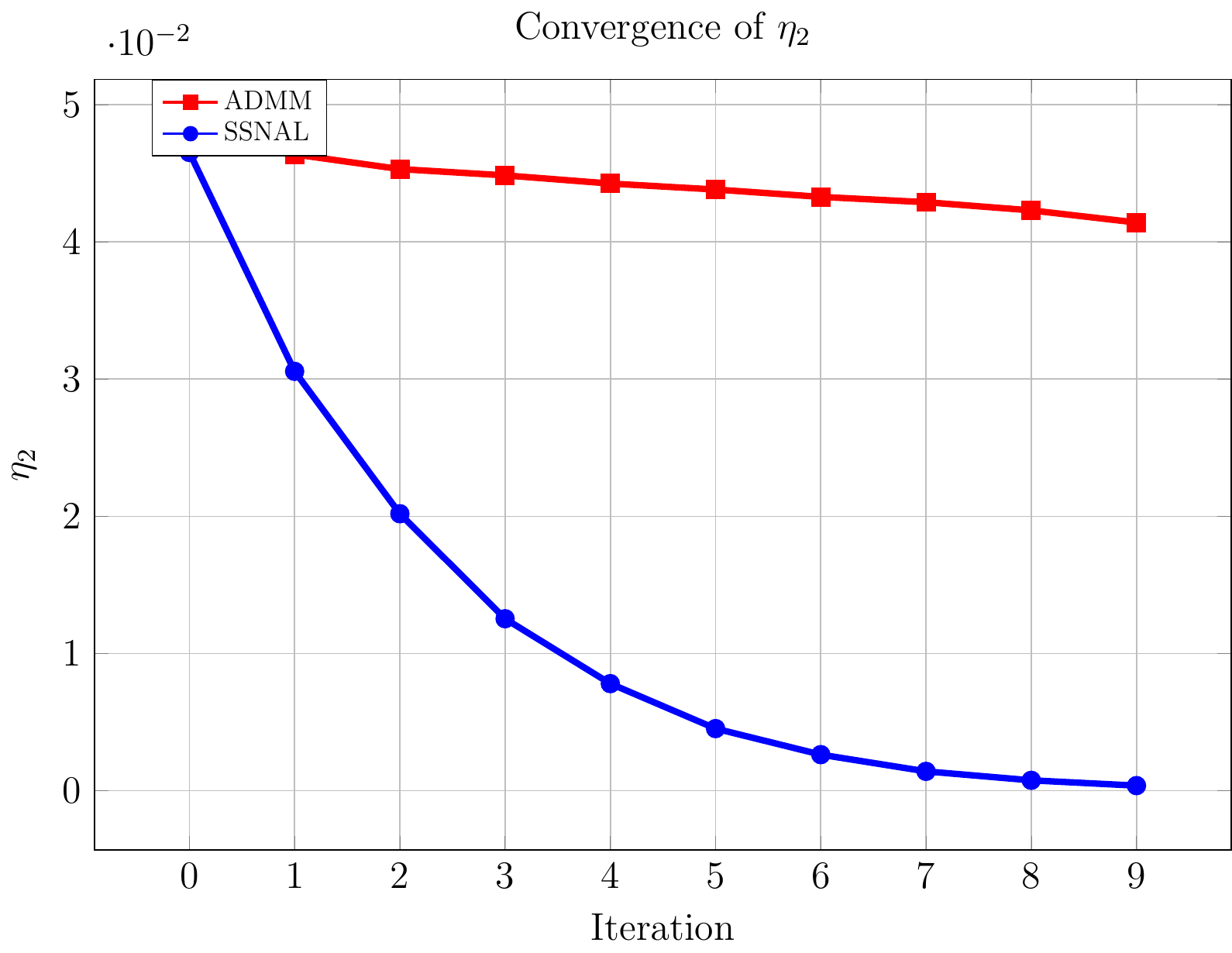}
        \caption{Convergence of KKT residuals on lattice data for $p=5$}
        \label{fig:convergence}
    \end{figure}
    
\subsection{Exploring the cuts and cut-boundaries}

    For simplicity, we first introduce a bipartitioning framework, $V_0(\phi_B) := \{v_i : (\phi_B)_i = 0\}$, $V_1(\phi_B) := \{v_j : (\phi_B)_j = 1\}$ characterized by a binary vector $\phi_B \in \{0, 1\}^M$. Equivalently, each binary vector $\phi_B$ determines an edge set $E_{\phi_B} = \{(i, j) : (\phi_B)_i = 0, (\phi_B)_j = 1\}$ connecting two subsets $V_0(\phi_B)$, $V_1(\phi_B)$. 
    
    In Figure \ref{fig:mnistboundary} and \ref{fig:fmnistboundary}, we provide examples of samples that lie on the ``boundary'' of their respective
    partitions. The node boundary of a set $V_1$ with respect to a set $V_2$ is the set of vertices $v$ in $V_1$ such that for some node $u$ in $V_2$, there is an edge joining $u$ to $v$. We hypothesize that challenging samples lie on the cut-boundary between partitions.

    In \cref{fig:illustration-cuts-cora}, we illustrate the solution $\phi$ to \cref{eq:bcut-intro} when $p=2$ on the Cora dataset, using measures obtained from randomly sampled neural network papers and the rest of the classes, respectively.
    
    \begin{figure}[htb!]
        \centering
        \begin{subfigure}[b]{0.49\linewidth}
            \includegraphics[width=\linewidth]{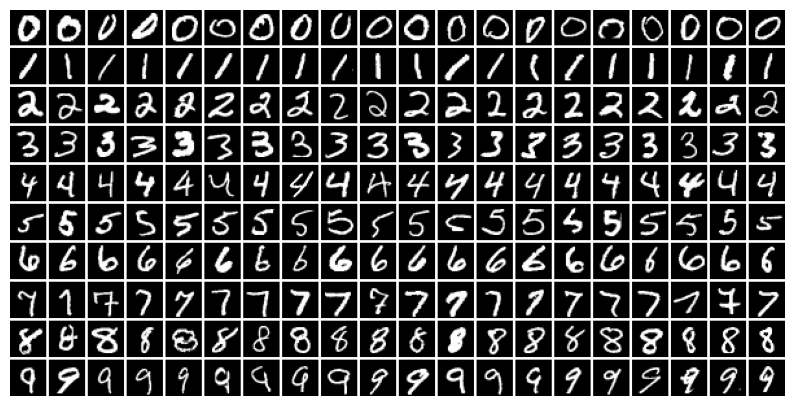}
            \subcaption{Random images}
            \label{fig:randommnistimages}
        \end{subfigure}
        \begin{subfigure}[b]{0.49\linewidth}
            \includegraphics[width=\linewidth]{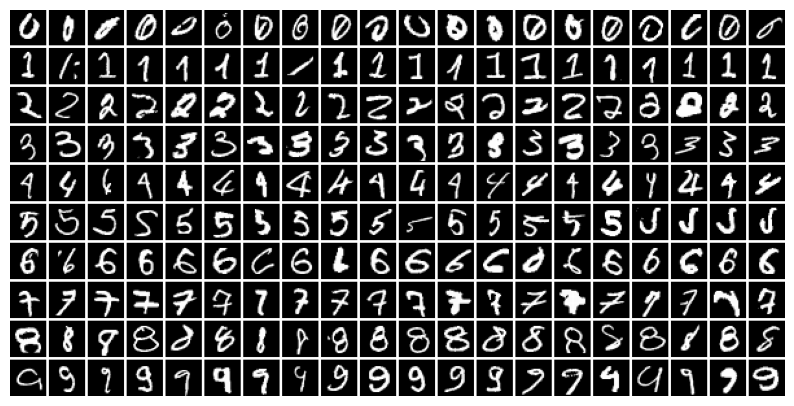}
            \subcaption{Boundary images}
            \label{fig:boundarymnistimages}
        \end{subfigure}
        \caption{MNIST experiments}
        \label{fig:mnistboundary}
    \end{figure}

    \begin{figure}[ht!]
        \centering
        \begin{subfigure}[b]{0.49\linewidth}
            \includegraphics[width=\linewidth]{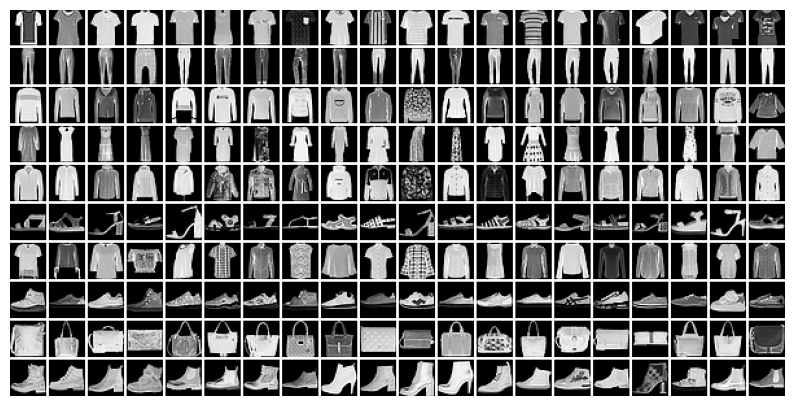}
            \subcaption{Random images}
            \label{fig:randomfmnistimages}
        \end{subfigure}
        \begin{subfigure}[b]{0.49\linewidth}
            \includegraphics[width=\linewidth]{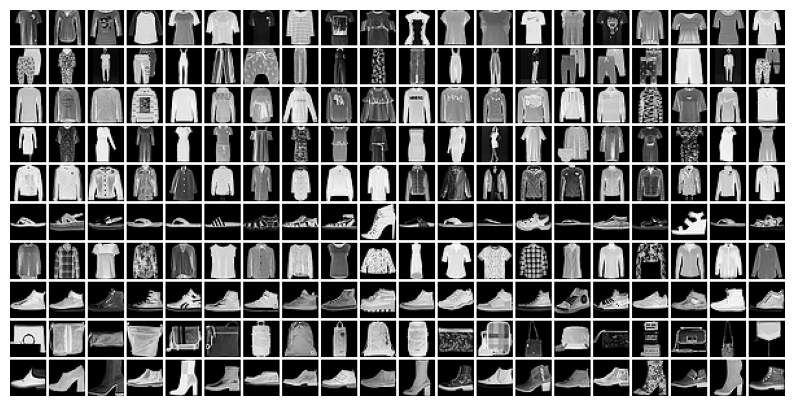}
            \subcaption{Boundary images}
            \label{fig:boundarymnistimages}
        \end{subfigure}
        \caption{FashionMNIST experiments. The labels are 0: T-shirt/top 1: Trouser 2: Pullover 3: Dress 4: Coat 5: Sandal 6: Shirt 7: Sneaker 8: Bag 9: Ankle boot}
        \label{fig:fmnistboundary}
    \end{figure}

    \begin{figure}[htb!]
        \begin{center}
            \includegraphics[width=0.5\textwidth]{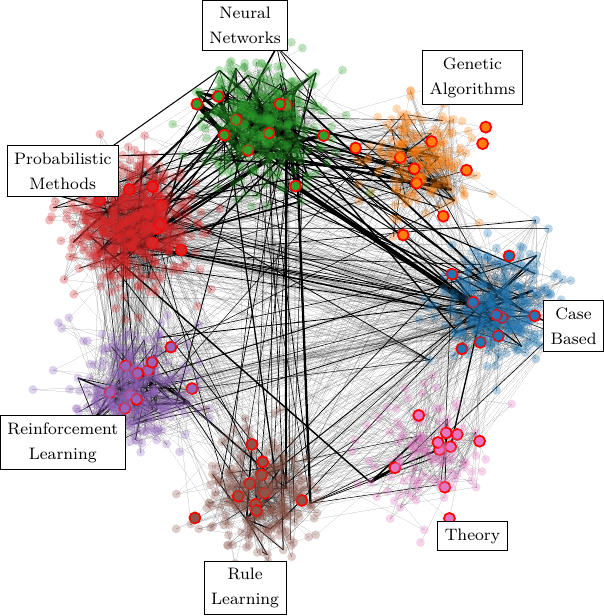}
        \end{center}\caption{An illustration of the one-vs-all minimum measure cut on a graph obtained from the Cora dataset~\cite{planetoid}. Here, the measure $\mu$ is obtained from five randomly selected nodes among all neural network papers and $\nu$ is obtained from five randomly selected nodes from each of the other classes. The width of each edge $\{i, j\}$ is proportional to $|\phi_i-\phi_j|$ when $|\phi_i-\phi_j|$ is large and edges are suppressed otherwise, where $\phi$ solves \cref{eq:bcut-intro} for $p=2$. The training nodes used for constructing $\mu,\nu$ are distinguished by size and red outlines.}\label{fig:illustration-cuts-cora}
    \end{figure}

    \newpage

\section{Proofs from \Cref{sec:properties-and-theory}}\label{sec:proofs-sec-2}

    \begin{proof}[Proof of \cref{th:grl-max-flow-min-cut}]
        The proof here comes in two parts: \textbf{(1)}, showing that \cref{eq:linear-program-c1} and \cref{eq:bcut-intro} for $p=1$ are the same problem, and \textbf{(2)}, deriving \cref{eq:dual-linear-program-c1} as the Lagrangian dual to \cref{eq:linear-program-c1}. The final observation follows from strong duality for linear programs, since both the primal and dual problems are feasible because $G$ is connected. We remind the reader that $n = |V|$ and $N = |E|$.
        
        \textbf{(1)} The only difference between the linear program \cref{eq:linear-program-c1} and \cref{eq:bcut-intro} for $p=1$ is the introduction of a dummy variable $k_{ij}$. To eliminate it, the constraint $k_{ij}\geq \psi_i - \psi_j$ suggests that to minimize the objective we set $k_{ij} = \psi_i - \psi_j$ when $\psi_i - \psi_j \geq 0$, and $k_{ij} = 0$ otherwise. Thus, the linear program has the reduced form
            \begin{align*}
                \inf \left\{\sum_{e=\{i, j\}\in E}w_{ij}|\psi_i - \psi_j| : (\mu-\nu)^T \psi\geq 1\right\},
            \end{align*}
        which is equal to \cref{eq:bcut-intro}.
        
        \textbf{(2)} Write the variables for the primal problem in the form:
            \begin{align}
                X = \begin{bmatrix}
                    k \\
                    \psi_+\\ 
                    \psi_-
                \end{bmatrix}\in\mathbb{R}^{2N + n + n},
            \end{align}
        where we introduce the dummy variables $\psi_-, \psi_+\in\mathbb{R}^n$ so that $\psi = \psi_+ - \psi_-$ with $\psi_\pm\geq 0$, and introduce the matrix
            \begin{align}
                C^T = \begin{bmatrix}
                    I_{2N\times 2N} & -\widetilde{B}^T & \widetilde{B}^T\\
                    0_{2N}^T & (\mu-\nu)^T & -(\mu-\nu)^T
                \end{bmatrix}\in\mathbb{R}^{(2N+1)\times (2N + n + n)}.
            \end{align}
        Lastly, let the fixed vector $c\in\mathbb{R}^{2N+n+n}$ be given by
            \begin{align}
                {c} = \begin{bmatrix}
                    w\\
                    0_{n}\\
                    0_{n}
                \end{bmatrix}\in\mathbb{R}^{2N + n + n},
            \end{align}
        where $w\in\mathbb{R}^{2N}$ is the vector of edge weights indexed by $E'$. Then \cref{eq:linear-program-c1} can be written
            \begin{align}\label{eq:condensed-lp}
                \begin{cases}
                    \text{minimize }& {c}^T{X}\\
                    \text{subject to }&X\geq 0\\
                    &C^T X\geq \begin{bmatrix}
                        0_{2N}\\
                        1
                    \end{bmatrix}
                \end{cases}
            \end{align}
        Then, from the general form of the linear program in \cref{eq:condensed-lp}, we can obtain the dual form
            \begin{align}\label{eq:condensed-dual-lp}
                \begin{cases}
                    \text{maximize }& \begin{bmatrix}
                        0_{2m}\\
                        1
                    \end{bmatrix}^T Y\\
                    \text{subject to }&Y\geq 0\\
                    &C Y\leq c
                \end{cases}.
            \end{align}
        Therefore, we write $Y^T = \begin{bmatrix}J^T &f \end{bmatrix}^T$ for $J\in\mathbb{R}^{2N}$ and $f\in\mathbb{R}$, to obtain, upon inspection, \cref{eq:dual-linear-program-c1}.
    \end{proof}

    \begin{proof}[Proof of \cref{thm:mincuts-random-cuts}]
        Letting $\phi\in\mathbb{R}^n$ be fixed, we note first that if $T\sim\mathrm{Unif}([0, \|\phi\|_\infty])$, then 
            \begin{align*}
                \ep{\# E(A_T, B_T)} &= \sum_{\{i,j\}\in E} w_{ij}\pr{\min_{x\in \{i, j\}}\phi(x)< T \leq \max_{y\in \{i, j\}}\phi(y)}\\
                &= \frac{1}{\|\phi\|_\infty}\sum_{\{i,j\}\in E} w_{ij}|\phi_i-\phi_j|
            \end{align*}
        where $A_T = \{\phi_i \geq T\}$ and $B_T = \{\phi_i < T\}$. Next we have that, assuming $\phi \geq 0$, it holds
            \begin{align*}
                \ep{\sum_{i\in A_T}\mu_i-\nu_i} &= \sum_{i\in V}(\mu_i-\nu_i)\pr{\phi_i > T}\\
                &= \sum_{i\in V}(\mu_i-\nu_i)\frac{\phi_i}{\|\phi\|_\infty} = \frac{1}{\|\phi\|_\infty}
            \end{align*}
        Note that the optimal value of $\cp$ as defined by the program \cref{eq:bcut-intro} is unchanged when the added assumption $\phi\geq 0$ is given.
    \end{proof}

\subsection{Gauge duality background and proofs}\label{subsec:gauge}

    In this subsection we cover some background from gauge duality and provide a proof of \cref{thm:generalized-gauge-duality}. As a linearly constrained (weighted) norm minimization problem, $\cp$ occurs as an instance of a broader class of so-called \emph{gauge optimization problems}. These problems have been studied for their special properties since at least~\cite{freund1987dual}. In particular, following the convention of \cite{friedlander2014gauge}, we consider optimization problems of the form
        \begin{align}\label{eq:primal-gauge}
            \begin{cases}
                \text{minimize } &\kappa(x)\\
                \text{subject to }&\rho(Ax - b) \leq \sigma\\
            \end{cases}
        \end{align}
    where $x\in\mathbb{R}^n$, $A\in\mathbb{R}^{m\times n}$, $b\in\mathbb{R}^m$, $\sigma\in\mathbb{R}$, and $\kappa:\mathbb{R}^n\rightarrow\mathbb{R}$, $\rho:\mathbb{R}^m\rightarrow\mathbb{R}$ are gauge functions (i.e., nonnegative, positively homogeneous, and convex functions which vanish at the origin). For such problems, there exists a dual program known as the gauge dual (distinguished from the Lagrangian dual), which can be written in the form
        \begin{align}\label{eq:gauge-dual}
            \begin{cases}
                \text{minimize } &\kappa^0(A^T y)\\
                \text{subject to }& b^T y - \sigma \rho^0(y) \geq 1 \\
            \end{cases}
        \end{align}
    where the gauge polar function $\kappa^0$ (resp. $\rho^0$) is the function which best satisfies the Cauchy-Schwartz-type inequality $ y^T x \leq \kappa(x)\kappa^0(y)$, or more precisely,
        \begin{align*}
            \kappa^0(y) = \inf \left\{\mu >0 : y^T x \leq \mu \kappa(x), \text{ for each }x\in\mathbb{R}^n\right\},
        \end{align*}
    and which is also a gauge function (not that the gauge polar to a norm is the dual norm). As was shown in \cite{friedlander2014gauge}, there exist sufficient conditions under which the primal problem \cref{eq:primal-gauge} and the gauge dual \cref{eq:gauge-dual} display strong duality, which in this setting means that $v_p v_d = 1$ where $v_p$ and $v_d$, respectively, are the optimal values of the primal and dual gauge programming problems.    

    \begin{proof}[Proof of Proposition~\ref{thm:generalized-gauge-duality}]
        Assume $1 < p < \infty$ and $1/p+1/q=1$. We work from the Beckmann metric, noting that \cref{eq:defn-beckmann-1} has the form
            \begin{align*}
                \mathcal{B}_{w^{1-q}, q}(\mu,\nu) \;=\;\text{minimize }\|J\|_{w^{1-q}, q}
                \quad
                \text{subject to }
                \|BJ - (\mu-\nu)\|_2\leq 0.
            \end{align*}
        With $\kappa(\cdot) = \|\cdot\|_{w^{1-q}, q}$, and $\rho(\cdot) = \|\cdot\|_2$ we have that the gauge polar function is the dual norm, i.e., $\kappa^0(\cdot) = \|\cdot\|_{w, p}$ via \cref{lemma:norm-duality} and $\rho^0 = \rho$. Thus we have that the gauge dual program is
            \begin{align*}
                \text{minimize }\|B^T\phi\|_{w, p}
                \quad
                \text{subject to }
                \phi^T(\mu-\nu) \geq 1.
            \end{align*}
        which of course is $\mathcal{C}_p$. Strong gauge duality holds for feasible linearly constrained norm optimization problems (see, e.g., by Corollary 5.2 in \cite{friedlander2014gauge}), and thus the claim follows. The cases of $p=1,\infty$ are similar.
    \end{proof}

    \begin{lemma}\label{lemma:norm-duality}
        Let $w_1,w_2,\dotsc, w_n>0$ be fixed positive weights, and for $x\in\mathbb{R}^n$ and $1\leq p\leq \infty$ define
            \begin{align*}
                \|x\|_{w,p} &= \begin{cases}
                    \left(\sum_{i=1}^n w_i |x_i|^p\right)^{1/p} &\text{ if }1\leq p <\infty\\
                    \max_{1\leq i \leq n}w_i |x_i| &\text{ if }p=\infty\\
                \end{cases}.
            \end{align*}
        Then the norm dual to $\|\cdot\|_{w,p}$, denoted $\|\cdot\|_{w,p}^\ast$, is
            \begin{align*}
                \|y\|_{w,p}^\ast &= \begin{cases}
                    \max_{1\leq i \leq n}\frac{|y_i|}{w_i} &\text{ if }p=1\\
                    \|y\|_{w^{1-q},q} &\text{ if }1<p<\infty\text{ and }1/p+1/q=1\\
                    \|y\|_{w^{-1},1} &\text{ if }p=\infty\\
                \end{cases}
            \end{align*}
        for $y\in\mathbb{R}^n$.
    \end{lemma}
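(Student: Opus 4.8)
The plan is to reduce the claim to the textbook duality $(\ell_p)^\ast = \ell_q$ on $\mathbb{R}^n$ by a diagonal rescaling that absorbs the weights. Fix $1 \le p < \infty$ and let $q$ be its conjugate exponent (with the convention $q = \infty$ when $p=1$). Set $D = \mathrm{diag}(w_1^{1/p}, \ldots, w_n^{1/p})$, which is invertible precisely because every $w_i > 0$. By construction $\|x\|_{w,p} = \|Dx\|_p$, so $\|\cdot\|_{w,p}$ is a genuine norm, and writing the dual norm as $\|y\|_{w,p}^\ast = \sup\{\, y^\top x : \|x\|_{w,p} \le 1 \,\}$ and substituting $z = Dx$ gives
\[
\|y\|_{w,p}^\ast = \sup_{\|z\|_p \le 1} y^\top D^{-1} z = \sup_{\|z\|_p \le 1} (D^{-1} y)^\top z = \|D^{-1}y\|_q,
\]
where the last equality is the standard fact that the norm dual to $\|\cdot\|_p$ is $\|\cdot\|_q$.

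It then remains to identify $\|D^{-1}y\|_q$ in each regime. For $1 < p < \infty$ we have $(D^{-1}y)_i = w_i^{-1/p} y_i$, hence $\|D^{-1}y\|_q^q = \sum_{i=1}^n w_i^{-q/p} |y_i|^q$, and since $1/p + 1/q = 1$ forces $q/p = q-1$, the weight equals $w_i^{-(q-1)} = w_i^{1-q}$, giving $\|y\|_{w,p}^\ast = \|y\|_{w^{1-q},q}$. For $p = 1$ the same substitution yields $\|D^{-1}y\|_\infty = \max_i |y_i|/w_i$. For $p = \infty$ I would run the identical argument starting from $\|x\|_{w,\infty} = \|Dx\|_\infty$ with $D = \mathrm{diag}(w_1,\ldots,w_n)$; the substitution $z = Dx$ and the duality $(\ell_\infty)^\ast = \ell_1$ give $\|y\|_{w,\infty}^\ast = \sup_{\|z\|_\infty \le 1}(D^{-1}y)^\top z = \|D^{-1}y\|_1 = \sum_i |y_i|/w_i = \|y\|_{w^{-1},1}$.

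There is no substantive obstacle: the argument is a one-line change of variables plus the classical Hölder/$\ell_p$-duality. The only points needing care are the exponent bookkeeping $q/p = q-1$ (and the boundary conventions at $p \in \{1,\infty\}$), and noting that positivity of the weights is exactly what makes $D$ a bijection, so that the rescaling carries the unit ball of $\|\cdot\|_{w,p}$ onto that of $\|\cdot\|_p$.
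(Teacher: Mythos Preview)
Your proof is correct and is essentially the same as the paper's: the paper establishes the $1<p<\infty$ case by writing $x_iy_i=(x_iw_i^{1/p})(y_iw_i^{-1/p})$ and applying H\"older, which is exactly your substitution $z=Dx$ with $D=\mathrm{diag}(w_i^{1/p})$ followed by the standard $\ell_p$--$\ell_q$ duality; the $p=1$ case is handled by the same direct estimate and extremizer, and $p=\infty$ by reflexivity. The only cosmetic difference is that you invoke the unweighted duality as a black box whereas the paper re-derives it via H\"older, which makes your write-up a bit more streamlined.
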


    \begin{proof}
        We begin with the case of $p=1$. We must compute
            \begin{align*}
                \|y\|_{w,1}^* &= \sup_{\|x\|_{w,1}\le1} x^T y = \sup_{x\in\mathbb{R}^n} \left\{ x^T y : \sum_{i=1}^n w_i|x_i|\le1 \right\}.
            \end{align*}
        If $\sum_{i=1}^n w_i|x_i|\le1$, then 
            \begin{align*}
                \sum_{i=1}^n x_i\,y_i 
                \le \sum_{i=1}^n |x_i|\,|y_i|
                \le \bigl(\max_i \tfrac{|y_i|}{w_i}\bigr)\sum_{i=1}^n w_i\,|x_i|
                \le \max_{1\le i\le n}\,\frac{|y_i|}{w_i}.
            \end{align*}
        Thus the supremum is at most $\max_i \tfrac{|y_i|}{w_i}$. To see that this value is achieved, fix an index $k$ such that $\tfrac{|y_k|}{w_k}=\max_i \tfrac{|y_i|}{w_i}$. Setting $x_k := \tfrac{\mathrm{sgn}(y_k)}{w_k}$ and $x_i=0$ for $i\neq k$, one verifies $x^Ty = \max_i \tfrac{|y_i|}{w_i}$. The case of $p=\infty$ then follows similarly since finite-dimensional normed spaces are reflexive.

        Now assume $1<p<\infty$ holds. Let $q$ satisfy $\frac1p+\frac1q=1$. By definition, $\|x\|_{w,p}\le1$ holds if and only if $\sum_{i=1}^n w_i\,|x_i|^p \le1$. Using Hölder's inequality, we write
            \begin{align*}
                x^T y
                =\sum_{i=1}^n x_i\,y_i
                =\sum_{i=1}^n \Bigl(x_i\,w_i^{\tfrac1p}\Bigr)\Bigl(y_i\,w_i^{-\tfrac1p}\Bigr)
                \le
                \Bigl(\sum_{i=1}^n w_i\,|x_i|^p\Bigr)^{\!\tfrac1p}
                \Bigl(\sum_{i=1}^n w_i^{-\tfrac qp}\,\lvert y_i\rvert^q\Bigr)^{\!\tfrac1q}
                \le
                \Bigl(\sum_{i=1}^n w_i^{-\tfrac qp}\,\lvert y_i\rvert^q\Bigr)^{\!\tfrac1q},
            \end{align*}
        since $\sum_{i=1}^n w_i\,|x_i|^p \le1$. Thus
        $\sup_{\|x\|_{w,p}\le1} \langle x,y\rangle \le
        \Bigl(\sum_{i=1}^n w_i^{-\tfrac qp}\,\lvert y_i\rvert^q\Bigr)^{\!\tfrac1q}$. To see that equality holds, one may choose $x$ proportional to 
        $\bigl(w_i^{-\tfrac1p}\,\mathrm{sgn}(y_i)\,|y_i|^{\tfrac{q-1}{q}}\bigr)$. In that case, the Hölder bound becomes an equality, and we conclude that
        $\|y\|_{w,p}^* = \bigl(\sum_{i=1}^n w_i^{-\tfrac qp}\,\lvert y_i\rvert^q\bigr)^{\tfrac1q}$. We note finally that $w_{i}^{-\frac{q}{p}} = w_{i}^{-q(1-\frac{1}{q})} =  w_{i}^{1-q}$.
    \end{proof}

\subsection{Proofs of stability results}\label{subsec:pf-stability}

    \begin{proof}[Proof of \cref{thm:robustness}]
        We note first that
            \begin{align*}
                \|\psi - \widetilde{\psi}_t\|_2 &\leq \| L^+ (I-e^{-tL})(\mu-\nu)\| + \| L^+ e^{-tL}\eta\|.
            \end{align*}
        Since $L^+$ and $e^{-tL}$ may be mutually diagonalized by the same orthogonal matrix $U$, we have
            \begin{align*}
                \| L^+ e^{-tL}\eta\| &= \| U \mathrm{diag}(0, \dotsc, 0, \lambda_k^{-1}e^{-t\lambda_k},\dotsc, \lambda_n^{-1}e^{-t\lambda_n}) U^T\eta\|\\
                &\leq \lambda^{-1}e^{-t\lambda}\|\eta\|
            \end{align*}
        where we assume $k\geq 2$ is the index first nonzero eigenvalue of $L$, and the eigenvalues are ordered in ascending manner. On the other hand, using the bound $e^{x}\geq 1+x$ which holds for all $x$ and the same diagonalization reasoning as before, we have
            \begin{align*}
                \|L^+(I-e^{-tL})\| &= \max_{i \geq k} \lambda_{i}^{-1} (1-e^{-t\lambda_i}) \leq \max_{i \geq k} \lambda_{i}^{-1} (1-(1-t\lambda_i)) = t,
            \end{align*}
        and therefore \cref{eq:improved-bound} follows. Thus, we look for $t$ which satisfy
            \begin{align*}
                t\|\mu-\nu\|_2 + \lambda^{-1}e^{-t\lambda}\|\eta\|_2 < \frac{1}{\lambda}\|\eta\|_2,
            \end{align*}
        which is equivalent to 
            \begin{align}\label{eq:equivalent}
                t\lambda\frac{\|\mu-\nu\|_2}{\|\eta\|_2} + e^{-t\lambda} < 1.
            \end{align}
        By \cref{lem:calculus}, assuming $\frac{\|\eta\|_2}{\|\mu-\nu\|_2} > 1$, if $t< \frac{1}{\lambda}\left(\frac{\|\eta\|_2}{\|\mu-\nu\|_2} - 1\right)$, we have that \cref{eq:equivalent} holds, from which the rest of the claim follows.
    \end{proof}

    \begin{lemma}\label{lem:calculus}
        Let $0 < \alpha < 1$ be fixed. If $ 0 < x < \frac{1}{\alpha} - 1$, then $\alpha x + e^{-x} < 1$.
    \end{lemma}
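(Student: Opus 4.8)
The plan is to set $g(x) := \alpha x + e^{-x} - 1$ and to show $g(x) < 0$ throughout the open interval $\bigl(0, \tfrac1\alpha - 1\bigr)$. The first observations are that $g(0) = 0$ and that $g$ is strictly convex, since $g''(x) = e^{-x} > 0$. Consequently, on any closed subinterval $[0,b]$ the maximum of $g$ is attained at an endpoint, and for $x$ strictly between the endpoints one has $g(x) < \max\{g(0), g(b)\}$ by strict convexity. Taking $b = \tfrac1\alpha - 1$, it therefore suffices to verify that $g$ is nonpositive at this right endpoint.

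Next I would evaluate $g$ at $x^\ast := \tfrac1\alpha - 1 = \tfrac{1-\alpha}{\alpha}$. Since $\alpha x^\ast = 1 - \alpha$, the linear term cancels against the constant $-1$, leaving
\[
    g(x^\ast) = e^{-(1/\alpha - 1)} - \alpha .
\]
So the claim reduces to the inequality $e^{-(1/\alpha - 1)} < \alpha$, equivalently (taking logarithms and writing $t = 1/\alpha > 1$) to $t - 1 > \ln t$. This is the standard elementary inequality $\ln t \le t - 1$, valid for all $t > 0$ with equality only at $t = 1$; since $t = 1/\alpha > 1$ strictly, we obtain $g(x^\ast) < 0$.

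Combining the two parts: $g(0) = 0$, $g(x^\ast) < 0$, and strict convexity of $g$ on $[0, x^\ast]$ gives $g(x) < \max\{g(0), g(x^\ast)\} = 0$ for every $x \in (0, x^\ast)$, which is precisely the assertion. The only real content is the endpoint computation and the recognition that it collapses to $\ln t \le t - 1$; I do not expect any step to present a genuine obstacle. Alternatively, one could bypass convexity by noting that $g'(x) = \alpha - e^{-x}$ vanishes only at $x = \ln(1/\alpha)$, which the same logarithmic inequality places inside $(0, x^\ast)$; then $g$ decreases from $g(0) = 0$ on $(0, \ln(1/\alpha))$ and increases on $(\ln(1/\alpha), x^\ast)$ toward the negative value $g(x^\ast)$, so again $g < 0$ on $(0, x^\ast)$.
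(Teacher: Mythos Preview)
Your proof is correct but takes a genuinely different route from the paper's. The paper argues more directly: it applies $e^x \ge 1+x$ in the form $e^{-x} \le \frac{1}{1+x}$ to bound $\alpha x + e^{-x} \le \alpha x + \frac{1}{1+x}$, and then observes algebraically that
\[
\alpha x + \frac{1}{1+x} - 1 \;=\; \alpha x - \frac{x}{1+x} \;=\; x\Bigl(\alpha - \frac{1}{1+x}\Bigr) < 0
\]
exactly when $x < \tfrac{1}{\alpha} - 1$. Your approach instead exploits the strict convexity of $g(x) = \alpha x + e^{-x} - 1$ to reduce the whole claim to checking the two endpoints, and the right-endpoint check collapses to $\ln t < t - 1$ with $t = 1/\alpha > 1$. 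Both arguments ultimately rest on the same elementary inequality $e^x \ge 1 + x$, but the paper applies it pointwise to dominate the exponential term throughout the interval, whereas you apply it once at the boundary and let convexity handle the interior. The paper's version is a touch shorter and purely algebraic; yours makes the structure of the function transparent and, as you note, the alternative critical-point argument via $g'(x) = \alpha - e^{-x}$ would work just as cleanly.
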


    \begin{proof}
        From $e^x \geq 1+x$ we have $\alpha x + e^{-x} \leq \alpha x + \frac{1}{x+1}$. Since $ x < 1/\alpha - 1$ we have $\alpha - \frac{1}{x+1} <0$ which implies $x\alpha -\frac{x}{1+x} <0$ since $x>0$. But then
            \begin{align*}
                x\alpha -\frac{x}{1+x} = x\alpha +\frac{1}{x+1} -1 < 0,
            \end{align*}
        from which the claim follows.
    \end{proof}

\section{Proofs from \Cref{sec:algorithms}}\label{sec:proofs-sec-3}
    
    \begin{proof}[Proof of \cref{prop:prox-semismooth}]
        Recall the definition of $s(u)$: $s(u) = \sum_{i \leq N}w_i|u_i|^p$ and the associated proximal operator:
        $$
        F(v) := \text{prox}_{\lambda s}(v) = \min_u \left \{\frac{1}{2}||u - v||_2^2 + \lambda s(u)\right \}
        $$
        Note that $s(u)$ is coordinate-seperable. Likewise, the proximal operator of a seperable function decomposes coordinatewise. Consider
        $$
        f(u) = \frac{1}{2}(u - x)^2 + \alpha  |u|^p,\:\: \alpha > 0.
        $$
        Consider the scalar proximal mapping
        $$
        \hat{u}(x) = \argmin_{u \in \mathbb{R}}f(u).
        $$
        We will show that $u \to \bar{u}(x)$ is piecewise polynomial (and hence Lipschitz and strongly semismooth).
    
        For $p=1$, when
        $$
        f(u) = \frac{1}{2}(u - x)^2 + \alpha  |u|
        $$
        the well‐known closed‐form solution is:
        $$
        \bar{u}(x) = \text{sign}(x)\max\{|x| - \alpha, 0\}
        $$
        This map is the usual``soft‐threshold'' operator, which is clearly piecewise linear in $x$ and thus strongly semismooth.
    
        For the integer cases where $p\geq 2$, the function $|u|^p$ is actually a \emph{smooth} polynomial in $u$ (including at $u=0$). Indeed, for even $p$, it is $u^p$, and for odd $p$, it is $|u|^p$ which is $u|u^{p-1}|$, but still in $\mathcal{C}^1$ if $p\geq 2$. Consequently, $f(u) = \frac{1}{2}(u-x)^2 + \alpha |u|^p$ is a smooth (differentiable) function on $\mathbb{R}$. The minimizer $\bar{u}(x)$ is given by solving $f'(u) = 0$, i.e.
        $$
        f'(u) = (u-x) + \alpha p |u|^{p-1}\text{sign}(u) =0
        $$
        For each fixed $x$, this is a smooth equation in $u$. Moreover, the implicit function $\bar{u}(x)$ obtained by solving $f'(u) = 0$ is a $\mathcal{C}^1$ function of $x$, and thus \emph{strongly} semismooth.
    \end{proof}

    \begin{proof}[Proof of \cref{thm:ssnal-convergence}]
    From \cref{prop:prox-semismooth}, we know that $\text{prox}_{\lambda s(u)}$ is strongly semismooth for any $\lambda > 0$. By \cite{zhao2010newtoncgal} [Proposition 3.3], we know that $\Delta \phi$ obtained in SSNCG is a descent direction, which guarantees that the algorithm is well-defined. From \cite{zhao2010newtoncgal} [Theorem 3.4, 3.5], the desired result follows.
    \end{proof}

\section{Additional algorithmic details}

\subsection{Cut assignment in the $k$-way case} \label{sec:cut-assignment}
    
    Recall that in the $k$-way case, $\phi \in \mathbb{R}^{n\times k}$. In this section we describe one heuristic to map the potentials $\phi$ to one of $k$ partitions $V_1, \ldots V_k$. Represent the assignment of node $i$ to partition $r$ as the $r$-th standard basis vector $e_r = (0,\ldots,1,0,\ldots,0)$. Denote the assignment matrix $P \in \mathbb{Z}^{n\times k}$, with the $i$-th row of $P$ corresponding to the assignment associated with the $i$-th node, i.e.,
    $
    P_{ij} = \begin{cases}
                1 & \text{if } i \in V_j \\
                0 & \text{if } i \not \in V_j
             \end{cases}.
    $ 
    Assuming an estimate of the cardinality of the supports of each partitioning is given by a vector $m \in \mathbb{Z}_{\geq 0}$ with $m_i = |V_i|$ and $\mathbf{1}_k^\top m = n$, we intend to solve the binary linear program
    \begin{equation}
        \max_{P \in \{0,1\}^{n\times k}} \left \{ \langle \phi, P \rangle : P\mathbf{1}_k = \mathbf{1}_n,\:\: P^\top \mathbf{1}_k = m \right \}
    \end{equation}
    The constraints ensure that $P$ is a feasible cut-matrix\textemdash that for each row (node) only one partition is assigned and that for each partition $r$, exactly $m_r$ vertices lie in that partition. 

    We solve the following linear relaxation, which allows violation of cardinality constraint via the parameter $\epsilon \geq 0$:
    \begin{equation}
        \begin{aligned}
            \max_{P \in \mathbb{R}^{n\times k}} \{ \langle \phi, P \rangle :& P \geq 0,\:\: P\mathbf{1}_k = \mathbf{1}_n,\\
            &m -\epsilon \leq P^\top \mathbf{1}_k \leq m +\epsilon \}
        \end{aligned}
    \end{equation}
    \begin{remark}
        When $\epsilon = 0$, this is a linear program over the \emph{transportation polytope}. It is known that the simplex method recovers integer solutions. See \cite{deloera2013combinatoricsgeometrytransportationpolytopes}. When $\epsilon = n$, the optimum corresponds to the standard heuristic of thresholding the largest corresponding component of the vector $\phi^\ast$, i.e.,
        \begin{equation}\label{eq:labeldec}
            \phi^\ast_i = \argmax_{j\in \{1,\dots,k\}} \{\phi^\ast_{ij}\}.
        \end{equation}
    \end{remark}       

\subsection{$p$-conductance-MBO method}

    Empirically, the effectiveness of exact cut-based methods on $k$-nn graphs is known~\cite{calder20poisson, holtz2024continuous}. In particular, methods combinatorially search the discrete space of labels to refine continuous-valued predictions have demonstrate strong accuracy. However, the price is nonconvexity and computational cost. In order to properly evaluate the effectiveness of our method, we augment our approach with a cut-based refinement strategy by building off the prior state of the art.
    
    \cite{calder20poisson} propose a graph-cut method to incrementally adjust a seed decision boundary so as to improve the label accuracy and account for prior knowledge of class sizes. The proposed method applies several steps of gradient descent on the graph-cut problem:
    \begin{equation}
        \min_u \frac{1}{2} ||\nabla u||^2_{\ell^2(X)^2} + \frac{1}{\tau}\sum_{i=1}^n\prod_{j=1}^k|u(x_i) - u_j|^2
    \end{equation}
    More concretely, the time-spitting scheme that alternates gradient descent on two energy functionals is employed:
    \begin{equation}\label{eq:poissonmboiterations}
    \begin{aligned}
        &E_1(u) := \frac{1}{2} ||\nabla u||^2_{\ell^2(X)^2} - \mu\sum_{j=1}^m (y_k - \bar{y})\cdot u(x_j) \\ 
        &E_2(u) := \frac{1}{\tau}\sum_{i=1}^n\prod_{j=1}^k|u(x_i) - u_j|^2
    \end{aligned}
    \end{equation}
    The first term $E_1$ corresponds to the Poisson learning objective. Gradient descent on the second term $E_2$, when $\tau > 0$ is small, amounts to projecting each $u(x_i) \in \mathbb{R}^k$ to the closest label vector $e_j \in S_k$, while preserving the volume constraint $(u)X = b$. This projection is approximated by the following procedure: Let $Proj(S_k) : \mathbb{R}^k \to S_k$ be the closest point projection, let $s_1, \ldots , s_k > 0$ be positive weights, and set
    \begin{equation}
        u^{t+1}(x_i) = \text{Proj}_{S_k}(\text{diag}(s_1, \ldots , s_k)u^t(x_i))    
    \end{equation}
    where $\text{diag}(s_1, \ldots , s_k)$ is the diagonal matrix with diagonal entries $s_1, \ldots , s_k$. A simple gradient descent scheme to choose the weights $s_1, \ldots , s_k > 0$ is employed so that the volume constraint $(u^{t+1})X = b$ holds.

    We propose an analogous method that solves the proximity-regularized cut problem
    \begin{equation}
        \begin{aligned}
            &\min_{P \in \mathbb{R}^{n\times k}} \alpha \langle P^\top, LP \rangle - \langle \phi, P \rangle \\
            &\text{s.t. } P \in \{0,1\}^{n\times k}, \:\: P1_k = 1_n, P^\top 1_n = m\:\:
        \end{aligned}
    \end{equation}
    where $\alpha > 0$ is a weighting parameter balancing alignment with the $p$-conductance potentials vs. cut penalty. To solve this problem, we propose a splitting method which alternates between taking gradient steps to minimize the objective with projections onto the constraints. We term this method $p$-conductance-MBO due to it's similarity with the classic MBO method. Denote $\mathcal{P}$ the convex hull of the constraint set, i.e.,$\mathcal{P} = \{P : P \geq 0, \:\: P1_k = 1_n, P^\top 1_n = m\}$. The $p$-conductance-MBO procedure is summarized below.
        \begin{algorithm}[H]\caption{$p$-conductance-MBO}
            \begin{flushleft}
                \textbf{Input:} potentials $\phi$, step size $\eta$ 
            \end{flushleft}
            \begin{algorithmic}[1]\label{alg:beckmann-mbo}
                \STATE Initialize $P_0 \in \{0,1\}^{n\times k}$ 
                \WHILE{not converged} 
                \STATE Compute \\
                $P^{t+\frac{1}{2}} \gets P^t - \eta\left\{\nabla_{P^t}\alpha \langle (P^t)^\top, LP^t \rangle - \langle \phi, P^t \rangle\right\}$
                \STATE $P^{t + 1} = \arg\max_{P \in \mathcal{P}}\langle P, P^{t+\frac{1}{2}} \rangle$
                \ENDWHILE
            \end{algorithmic}
        \end{algorithm}
    
\subsection{First-order splitting methods}
    \label{sec:admm}
    In this section, we demonstrate a first-order Alternating Direction Method of Multipliers (ADMM) method for solving \eqref{eq:bcut-primal}. Introduce auxiliary variables $v_{ij} =\phi_i - \phi_j \in \mathbb{R}^k$ and multipliers $\lambda_{ij} \in \mathbb{R}^k$. ADMM works by finding a saddle point of the Lagrangian subject to primal constraints: 
    \begin{equation}
        \max_\lambda \min_{\phi, V} l (\phi, V, \lambda)
    \end{equation}
    Where the Lagrangian function $l(\phi, V, \lambda)$ is given by
    \begin{equation}
%    \small
        \begin{aligned}
            l(\phi, V, \lambda) = \sum_{i,j = 1}^n w_{ij}\left (||v_{ij}||^p_p + \lambda_{ij} \cdot (v_{ij} - \phi_i +
             \phi_j) + \frac{\nu}{2} ||v_{ij} - \phi_i + \phi_j||_2^2\right )
        \end{aligned}
%    \normalsize
    \end{equation}
    For some positive parameter $\nu > 0$. From \emph{Boyd, Convex Optimization}, ADMM consists of repeating the following three steps:
    
    \begin{enumerate}
        \item Each $v_{i,j}$ is a minimizer of
        \begin{equation}
            ||v_{ij}||^p_p + \frac{\nu}{2}||v_{ij} - z_{ij}||_2^2,
        \end{equation}
        Where $z_{ij} = \phi_i - \phi_j - \frac{1}{\nu}\lambda_{ij}$. The calculation of $v$ is given by the proximal map for $||\cdot||^p_p$.
        \item Each $\phi_{ij}$ is a minimizer of the following least squares problem
        \begin{equation}
            \sum_{i,j = 1}^n \frac{w_{ij}}{2}|| - \phi_i + \phi_j + v_{ij} + \frac{1}{\nu}\lambda_{ij}||^2\quad \text{s.t. } \phi^\top r = 1
        \end{equation}
        The computation is given in Remark 1.1.
        \item The update to the multipliers is given by
        \begin{equation}\label{eq:lambda}
            \lambda_{ij} \to \lambda_{ij} + \nu (v_{ij} - \phi_i + \phi_j)
        \end{equation}
    \end{enumerate}
    
    \begin{remark}\label{rem:admm}
        Let $p\in \{1,2\}$. We may assume that all matrices $\lambda$, $z$ and $v$ are all skew-symmetric. Indeed, let the matrix $\lambda$ be initially skew-symmetric. Then $z$ is skew-symmetric and $v$ is skew-symmetric as well from the definition the proximal map. And $\lambda$ is again skew-symmetric from \eqref{eq:lambda}. Hence, $\phi$ is a solution to the system
        \begin{equation}
            L \phi = W \odot (v + \frac{1}{\nu}\lambda)\mathbf{1}_n, \quad \phi^\top r = 1.
        \end{equation}
        
    \noindent Let $d_i = \sum_j w_{ij}$. Consider the first order condition for the partial derivatives: 
        \begin{equation}
            \sum_{j}w_{ij}(\phi_j - \phi_i + v_{ij} + \frac{1}{\nu} \lambda_{ij}) = 0
        \end{equation}
        Re-arranging, we get the system
        \begin{equation}
            \phi_i (\sum_j w_{ij}) = \sum_j w_{ij}(\phi_j + v_{ij} + \frac{1}{\nu}\lambda_{ij})
        \end{equation}
        Thus, we have the system
        \begin{equation}
            D \phi = W \phi + W \odot (v + \frac{1}{\nu} \lambda)\mathbf{1}_n
        \end{equation}
        Simplifying the above expression, we get 
        \begin{equation}
            L \phi =  W \odot (v + \frac{1}{\nu}\lambda)\mathbf{1}_n.
        \end{equation}
        Let $b=W \odot (v + \frac{1}{\nu}\lambda)\mathbf{1}_n$. Consider the constraint $\phi^\top r = 1$. The projection onto the constraint is calculated by $\phi = L^{\dagger}(b - \gamma r)$, where $\gamma = \frac{b^\top L^\dagger r - 1}{r^\top L^\dagger r}$.
    \end{remark}

\end{document}